%% file: neurips_2026.tex
\documentclass{article}
\usepackage[main,final]{neurips_2026}
\setcitestyle{numbers}

\usepackage[utf8]{inputenc}
\usepackage[T1]{fontenc}
\usepackage{amsmath,amsfonts,amssymb,amsthm,mathtools,bm}
\usepackage{nicefrac,microtype}
\usepackage[table]{xcolor}
\definecolor{fluyellow}{RGB}{255,245,80}
\usepackage{graphicx,wrapfig}
\usepackage{booktabs,array,multirow}
\usepackage{enumitem,mdframed,tcolorbox}
\usepackage{algorithm,algpseudocode}
\usepackage{hyperref}
\usepackage{url}
\usepackage[capitalize,noabbrev]{cleveref}

\newtheorem{theorem}{Theorem}
\newtheorem{corollary}{Corollary}

\newtheorem{assumption}{Assumption}
\newtheorem{definition}{Definition}

\title{Combating Catastrophic Forgetting in Continual Domain Adaptation via Knowledge Recasting}

\author{%
  Zijie Liu \\
  UNC at Chapel Hill \\
  jesseliu@cs.unc.edu \\
  \And
  Zhen Tan \\
  Stevens Institute of Technology \\
  ztan12@stevens.edu \\
  \And
  Charles Fleming \\
  Cisco Research \\
  chflemin@cisco.com\\
  \And
  Tianlong Chen \\
  UNC at Chapel Hill \\
  tianlong@cs.unc.edu
}

\begin{document}

\maketitle

\begin{abstract}
Adapting language models to new domains during post-training risks degrading previously acquired capabilities, a phenomenon commonly known as \emph{catastrophic forgetting}. Existing mitigation methods largely treat forgetting as a consequence of excessive model drift. They therefore constrain updates by preserving old outputs, replaying old data, or regularizing changes to parameters and representations so that the updated model remains close to its previous state. We take a \textit{different view}: {Rather than pulling the updated model back toward its old state, we translate the new domain into the model’s pre-existing knowledge space, making unfamiliar knowledge learnable through familiar procedures}. Therefore, we propose \textbf{Skill Schema Transport (SST)}, a post-training framework that transports new knowledge into the model's existing skill space. SST abstracts each example type into a skill schema, preserving its essential operations, dependencies, and verification steps while removing domain-specific surface forms.  By recasting new examples as instances of reusable procedures, SST turns knowledge update from trajectory memorization into procedural reuse. In continual domain adaptation settings, SST consistently reduces forgetting while improving generalization to related tasks. Across two 3B instruction-tuned backbones, SST improves average final performance on streamed domains by 2.53-5.75 points over sequential post-training. It also lowers max-drop forgetting in all evaluated settings, with the largest reduction from 42.41 to 7.10 points on the Qwen-3B MATH stream, while substantially recovering the OOD degradation induced by sequential updates.
Code is available at \url{https://github.com/jesseliu0913/Skill_Forgetting}.
\end{abstract}

\section{Introduction}
\label{sec:intro}
\input{intro}

\section{Related Work}
\label{sec:related_work}
\input{related_work}

\section{Preliminary}
\label{sec:preliminary}
\input{problem_define}

\input{ours}

\section{Experiments}
\label{sec:experiments}
\input{exp}

\vspace{-4mm}
\section{Conclusion}
\vspace{-4mm}
\label{sec:conclusion}

We proposed \emph{Skill Schema Transport} (SST), a post-training framework that mitigates forgetting by expressing new-domain examples through reusable procedural schemas rather than constraining model drift. Across streaming math and agent-task settings, SST improves retention and related-domain transfer over sequential fine-tuning and standard forgetting-mitigation baselines. Further analyses reduce the likelihood that the gains are explained solely by underfitting or a favorable stream order, and checkpoint trajectories are consistent with the view that later domains can reuse and sometimes reinforce shared procedures. These results suggest that continual adaptation can be improved not only by limiting model change, but by representing new knowledge in procedural coordinates the model already understands.

\begin{ack}
This research was partially supported by the National Science Foundation (NSF) under Award No. IIS-2551752 and a Cisco Faculty Award.
\end{ack}

\bibliographystyle{plain}
\bibliography{references}

\clearpage
\appendix
\input{appendix}

\end{document}

%% file: intro.tex
\begin{figure}[t]
    \centering
    \vspace{-3mm}
    \includegraphics[width=1\textwidth]{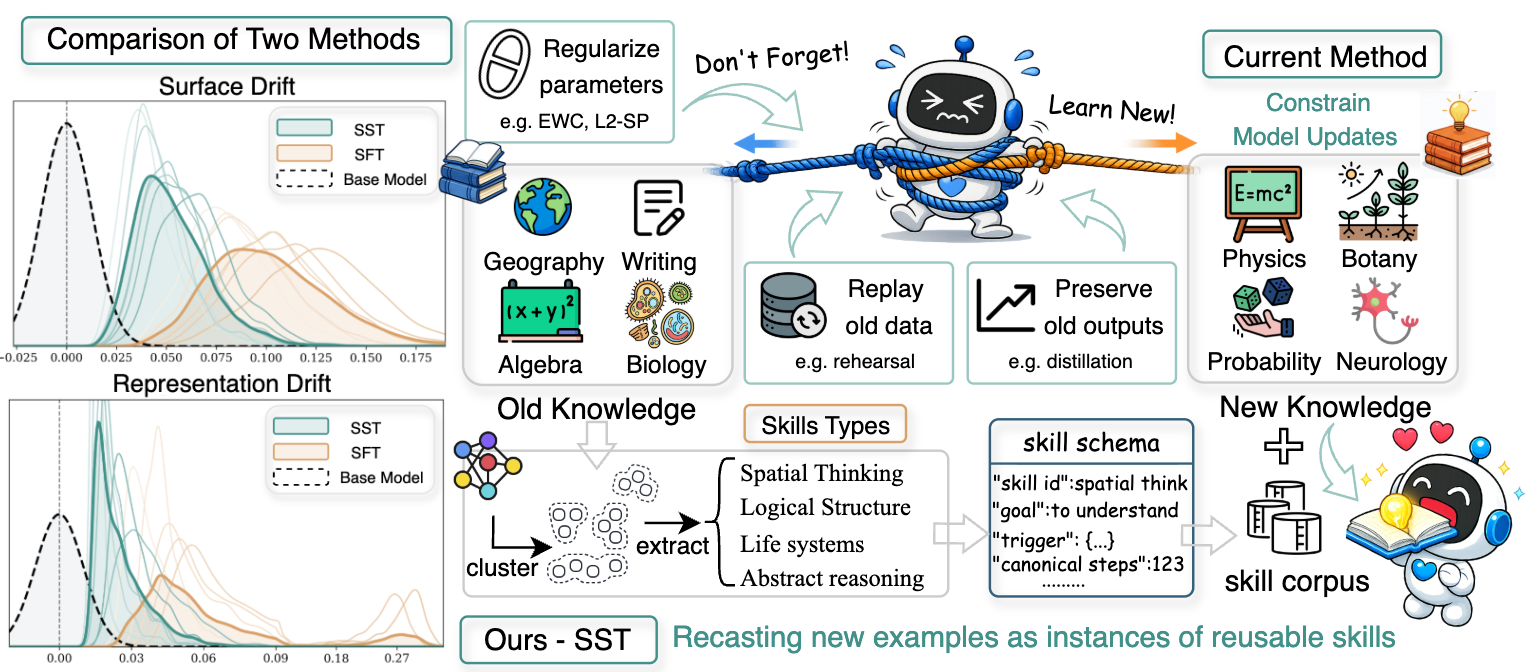}
    \vspace{-3mm}
    \caption{\small
    \textbf{Overview of Skill Schema Transport (SST).}
    SST recasts new-domain examples as instances of reusable skill schemas rather than anchoring old data, output distributions, or parameter values.
    The left panels illustrate the drift diagnostics used in our analysis, and the bottom pipeline shows how procedural skills are extracted, abstracted, stored, and used for schema-conditioned updates.
    }
    \label{fig:tease_image}
    \vspace{-3mm}
\end{figure}

Post-training improves language models (LMs) through supervised fine-tuning~\citep{howard2018universal}, reinforcement learning from feedback~\citep{ouyang2022training,bai2022constitutional}, and domain-adaptive continued training~\citep{gururangan2020don}.
However, sequential post-training can also degrade capabilities acquired before the update, a phenomenon commonly known as \emph{catastrophic forgetting}~\citep{mccloskey1989catastrophic,kirkpatrick2017overcoming,luo2023empirical,shi2025continual,wu2024continual}.
This risk becomes especially important under domain shift, where the update distribution introduces concepts, terminology, or reasoning contexts that differ from those emphasized by the pre-update model~\citep{shi2025continual,huang2024mitigating,zheng2025spurious}.

To clarify what existing mitigation strategies are designed to control, we view post-training under domain shift through the lens of model drift.
As shown in Figure~\ref{fig:two_drifts}, knowledge updates can change a model in at least two distinct ways.
\emph{Surface drift} refers to changes in the model's observable behavior, such as its output distribution or final answers, whereas \emph{representation drift} refers to changes in the internal states that support those behaviors.
These two forms of drift need not agree: the mapping from representations to outputs is many-to-one, so a model may reorganize its internal states while producing similar outputs, or substantially shift its output distribution while leaving representations relatively stable.

\begin{wrapfigure}{l}{0.35\textwidth}
    \centering
    \vspace{-3mm}
    \includegraphics[width=0.34\textwidth]{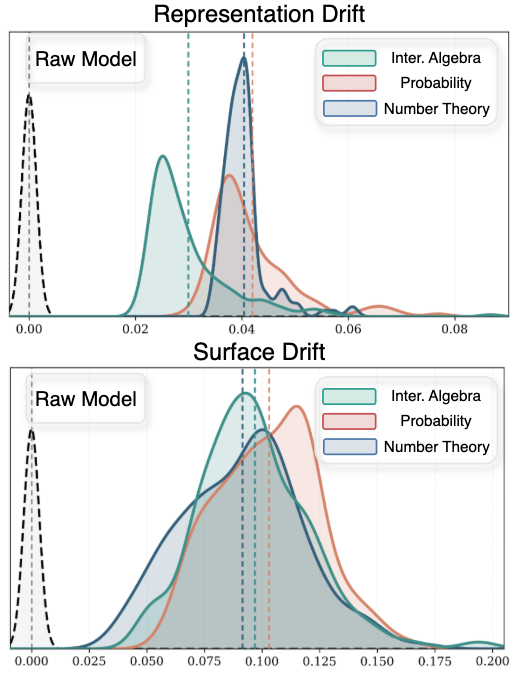}
    \caption{\small
    \textbf{Surface drift and representation drift need not align.}
    Domain-shifted updates induce different distributions of output-level and hidden-state change, motivating diagnostics that track both observable behavior and internal representations.
    }
    \label{fig:two_drifts}
    \vspace{-5mm}
\end{wrapfigure}

From this perspective, many forgetting-mitigation methods can be understood as different ways of anchoring the updated model to its pre-update behavior or state.
Replay-based methods mix old examples with new training data so that the model continues to reproduce earlier task behavior during the update~\citep{rolnick2019experience,lopez2017gradient,huang2024mitigating}.
Distillation-based methods use the original model as a teacher and penalize deviations from its outputs~\citep{li2016learning}.
Both families primarily constrain surface drift by preserving observable behavior on old examples.
In contrast, regularization-based methods penalize changes to important parameters or restrict updates to protected subspaces~\citep{kirkpatrick2017overcoming,zenke2017continual,aljundi2018memory,wang2023orthogonal,lu2025controlled}, thereby limiting drift in the model's internal state or parameterization.

Because these approaches primarily anchor one aspect of the update, anchoring surface behavior alone need not control representation drift, and anchoring parameter or representation movement alone need not preserve task behavior.
This points to a limitation of treating forgetting mitigation only as a problem of preventing change: the key question is not merely how much the model changes, but which aspects of the update are being constrained and whether those constraints help integrate the new domain.

Yet anchoring the model is not the same as integrating new knowledge.
Under out-of-domain updates, useful adaptation requires change: the model may need to acquire, emphasize, or recombine concepts and reasoning contexts that were weakly represented in the pre-update distribution.
\emph{The key issue is therefore not how to minimize change, but how to organize it.}
The organizing principle we propose is \textbf{procedural reuse}.
If a new problem can be solved by a procedure the model already uses, then the desirable update is not to memorize a new trajectory or to pull the model back toward its old outputs.
Instead, the update should connect the unfamiliar instance to a familiar procedure, expressing new knowledge in procedural coordinates the model can reuse.
Inspired by this view, we ask the following research question:

\begin{mdframed}[nobreak=true,userdefinedwidth=.99\linewidth,align=center,skipabove=3pt,skipbelow=3pt,innerleftmargin=6pt,innerbottommargin=6pt,innertopmargin=6pt,roundcorner=3pt,backgroundcolor=cyan!5,linecolor=gray]
\noindent \textbf{RQ}: \textit{Can new-domain knowledge be expressed in the procedural language of the model's existing skills, so that unfamiliar problems become learnable through familiar procedures?}
\end{mdframed}

To probe whether this premise is plausible, we analyze the error structure of a base model across MATH categories.
The same failed examples are embedded and projected in Figure~\ref{fig:error_clustering}; coloring them by procedural error type reveals clusters that cut across the original domain labels.
This diagnostic does not by itself prove that all domains share a complete skill library, but it suggests that observed failures are not purely domain-specific and motivates testing whether schema-level reuse can improve continual adaptation.

\begin{wrapfigure}{r}{0.65\textwidth}
    \centering
    \vspace{-6mm}\includegraphics[width=0.64\textwidth]{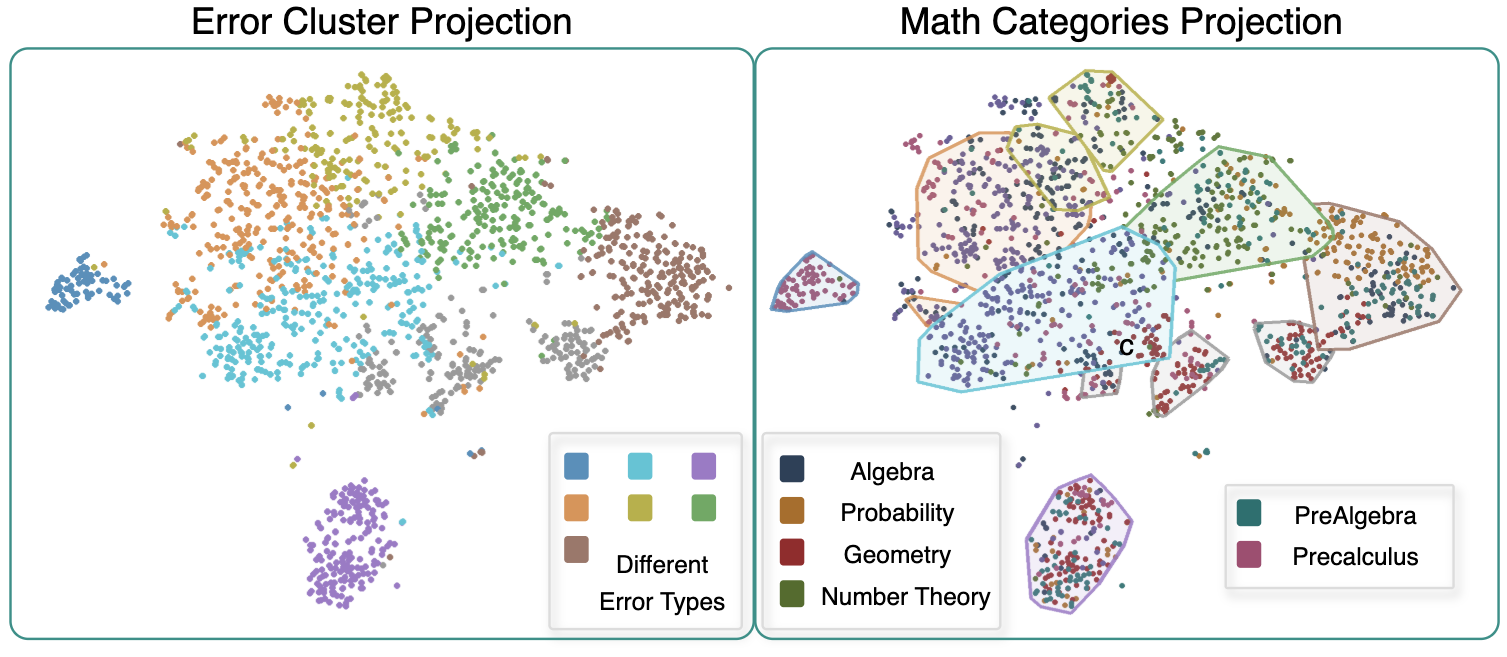}
    \caption{\small
    \textbf{Errors cluster by procedure rather than by domain label.}
    The same failed examples are projected and colored by procedural error type on the left and by MATH category on the right.
    Error clusters cut across category boundaries, motivating cross-domain skill schema reuse.
    }
    \label{fig:error_clustering}
    \vspace{-3mm}
\end{wrapfigure}

Based on this observation, we propose \emph{Skill Schema Transport} (SST), a schema-conditioned post-training framework that transports new-domain examples into a reusable procedural schema space.
A skill schema serves as the medium of this translation: it retains the operations, dependencies, and checks needed to solve a task, while abstracting away domain-specific wording, formats, and trajectories.
SST uses these schemas to associate unfamiliar examples with familiar procedures, allowing a new-domain example to be interpreted as a transported instance of a reusable skill rather than as an isolated new trajectory, as illustrated in Figure~\ref{fig:tease_image}.
We evaluate SST in streaming knowledge-update settings, where new-domain knowledge is introduced sequentially and the model must retain capabilities learned from earlier domains.
Across two 3B instruction-tuned backbones, a four-domain MATH stream, and a three-domain agent stream, SST improves average final performance on streamed domains by 2.53-5.75 points over the corresponding sequential post-training baseline.
It also reduces max-drop forgetting in all evaluated settings and maintains or substantially recovers OOD benchmark performance relative to sequential updates. To summarize, our main contributions are:
\begin{itemize}[leftmargin=*]
    \item We reframe forgetting mitigation under domain shift as a problem of knowledge organization rather than only model constraint, shifting the burden of adaptation from limiting model movement to structuring how new examples are represented during training.

    \item We provide drift and error-structure diagnostics showing that surface and representation drift need not align, and that procedural failure patterns can cut across domain boundaries, motivating procedural reuse as an alternative to purely drift-based anchoring.

    \item We propose \emph{Skill Schema Transport} (SST), a schema-conditioned post-training framework that extracts domain-agnostic skill schemas and transports new-domain examples into reusable procedural coordinates.

    \item Across two streaming settings comprising seven update domains and two 3B instruction-tuned backbones, SST improves streamed-domain performance by 2.53-5.75 points over sequential post-training, reduces max-drop forgetting in all evaluated settings, and maintains or substantially recovers OOD benchmark performance relative to sequential updates.
\end{itemize}

%% file: related_work.tex
\textbf{Continual Learning for Sequential LLM Training.}
Continual learning studies how models acquire new capabilities without overwriting old ones, with catastrophic forgetting as a central failure mode \citep{mccloskey1989catastrophic,robins1995catastrophic,french1999catastrophic,goodfellow2013empirical}. Existing methods mitigate forgetting through parameter consolidation, functional regularization, replay, and gradient constraints, including EWC, SI, MAS, and related approaches \citep{kirkpatrick2017overcoming,zenke2017continual,aljundi2018memory,chaudhry2018riemannian,schwarz2018progress,li2016learning,rebuffi2017icarlincrementalclassifierrepresentation,lopez2017gradient,chaudhry2018efficient,buzzega2020dark,parisi2019continual,de2021continual}. Analogous ideas appear in sequential RL through replay, distillation, and policy regularization, while recent LLM post-training studies show that continual pretraining, instruction tuning, SFT, and RLHF can all induce forgetting under domain shift \citep{rolnick2019experience,kaplanis2019policy,ouyang2022training,bai2022constitutional,luo2023empirical,wu2024continual,shi2025continual,zheng2025spurious}. Parameter-efficient methods such as LoRA, O-LoRA, and CLoRA reduce interference by restricting adaptation to low-rank, orthogonal, or regularized subspaces, but they still mainly act as drift-control mechanisms \citep{hu2022lora,wang2023orthogonal,lu2025controlled}. Our work is complementary: instead of only constraining surface or representation drift, we ask whether new-domain knowledge can be transported into procedures already learned by the model, and analyze this using representation-similarity tools such as SVCCA, PWCCA, and CKA~\citep{raghu2017svcca,morcos2018insights,kornblith2019similarity}.

\textbf{Skills, Temporal Abstraction, and Skill Discovery.}
The idea that complex behavior can be decomposed into reusable subroutines is central to hierarchical reinforcement learning, where Options, MAXQ, feudal RL, and Option-Critic define skills as temporally extended actions or subtasks that support behavioral reuse \citep{sutton1999between,dietterich2000hierarchical,dayan1992feudal,barto2003recent,bacon2017option,pateria2021hierarchical}. A related line studies unsupervised skill discovery, where methods such as VIC, DIAYN, DADS, and OPAL learn diverse, predictable, or recurring primitives without task-specific supervision \citep{gregor2016variational,eysenbach2018diversity,sharma2019dynamics,ajay2021opal}. We build on this abstraction view, but do not treat skills as options, policies, or transfer modules. Instead, we define skills as language-level schemas that preserve operations, dependencies, and checks while abstracting away domain-specific wording, formats, and trajectories, allowing us to test whether schema-based supervision induces less unnecessary representation drift during post-training \citep{raghu2017svcca,kornblith2019similarity}.

\textbf{Skill-Augmented and Continually Improving Agents.}
Recent LLM-agent systems externalize reusable experience as memories, reflections, trajectories, executable programs, or skill libraries. ReAct interleaves reasoning with environment actions, while Reflexion and ExpeL improve agents through verbal feedback, episodic memory, or distilled experience without updating model weights \citep{yao2022react,shinn2023reflexion,zhao2023expel}. Voyager extends this idea to open-ended lifelong learning with an executable skill library for embodied control \citep{wang2023voyager}. More recent work makes skill learning explicit through recursive skill libraries, persistent skill memories, multimodal experience-skill stores, and skill-oriented benchmarks~\citep{xia2026skillrl,li2026skillsbench,jiang2026xskill,zhou2026memento}. These methods typically use skills as external memories, prompts, tools, or inference-time resources, whereas our work studies schema-conditioned sequential post-training with retrieval-free main evaluation: SST uses skill schemas during training to encourage new-domain knowledge to be internalized through reusable procedures.

%% file: problem_define.tex
We formalize the continual domain adaptation setting used throughout the paper. We first define the update and retention distributions, then define forgetting and drift diagnostics, and finally introduce the schema objects used by SST.

\begin{definition}[Continual Learning on Shifting Domains]
\label{def:continual_domain_shifting}
Let $\mathcal{M}_{\theta_0}$ be a pre-trained language model and $\mathcal{D}_1,\ldots,\mathcal{D}_T$ a sequence of update distributions, where each $\mathcal{D}_t$ is supported on a domain $\Omega_t$ (e.g., a subject area such as algebra, medicine, or law) that may differ from the domains seen during pre-training. At step $t$, the model is updated from $\theta_{t-1}$ to $\theta_t$ using examples drawn from $\mathcal{D}_t$, while retaining performance on a set of retention distributions $\mathfrak{R}_t \subseteq \{\mathcal{Q}_i:1\le i<t\}$, where each $\mathcal{Q}_i$ is supported on the previously encountered domain $\Omega_i$.
\end{definition}

The setting specifies where new learning occurs and where retention is
measured. We next define forgetting as the maximum accuracy drop over the
observed stream.
\begin{definition}[Forgetting Gap]
\label{def:forgetting_gap}
For a retention distribution $\mathcal{Q}\in\mathfrak{R}_t$, let
$A_t(\mathcal{Q})$ denote the task accuracy on $\mathcal{Q}$ after update step
$t$. If $\mathcal{Q}$ corresponds to a domain introduced at step $t_{\mathcal{Q}}$,
we define its forgetting gap after the full stream as
\begin{equation}
    \mathcal{F}_{\mathcal{Q}}
    =
    \max_{t_{\mathcal{Q}}\le t\le T} A_t(\mathcal{Q})
    -
    A_T(\mathcal{Q}).
\end{equation}
We report average forgetting by averaging $\mathcal{F}_{\mathcal{Q}}$ over prior
retention distributions, excluding the final streamed domain because it is not
followed by a later update and therefore cannot be overwritten within the
observed stream.
\end{definition}

The forgetting gap measures the outcome of the update stream. To diagnose the updates themselves, we also track how much the model changes at the output and representation levels.

\begin{definition}[Drift Diagnostics]
\label{def:drift}
For an input $x$, let $p_\theta(\cdot\mid x)$ denote the model's output distribution over the vocabulary and $h_\theta(x)\in\mathbb{R}^d$ a hidden representation extracted from a fixed intermediate layer. Let $\mathcal{Q}_X$ denote the marginal distribution over inputs $x$. The cumulative drift from $\theta_0$ and the incremental drift from $\bar\theta=\theta_{t-1}$ are defined, respectively, as
\begin{align}
    G_{\mathrm{surface}}^{0}(\theta;\mathcal{Q})
    &= \mathbb{E}_{x\sim\mathcal{Q}_X}
    \left[\mathrm{JS}\!\left(p_{\theta}(\cdot|x)\,\Vert\,p_{\theta_0}(\cdot|x)\right)\right],
    \\
    G_{\mathrm{surface},\bar\theta}(\theta;\mathcal{Q})
    &= \mathbb{E}_{x\sim\mathcal{Q}_X}
    \left[\mathrm{JS}\!\left(p_{\theta}(\cdot|x)\,\Vert\,p_{\bar\theta}(\cdot|x)\right)\right],\\
    G_{\mathrm{repr}}^{0}(\theta;\mathcal{Q})
    &= \mathbb{E}_{x\sim\mathcal{Q}_X}
    \left[\mathrm{Dist}\!\left(h_{\theta}(x),\,h_{\theta_0}(x)\right)\right],
    \\
    G_{\mathrm{repr},\bar\theta}(\theta;\mathcal{Q})
    &= \mathbb{E}_{x\sim\mathcal{Q}_X}
    \left[\mathrm{Dist}\!\left(h_{\theta}(x),\,h_{\bar\theta}(x)\right)\right],
\end{align}

\end{definition}

These drift quantities measure changes in the model, but they do not specify
whether two examples use the same solution procedure. SST therefore uses a
schema-level representation of procedural structure.
We represent procedural structure via skill schemas. Let $\tau\in\mathcal{T}$
denote a reasoning trace for solving $x$, and let $\mathcal{B}_t$ denote the
schema memory accumulated through step $t$.
\input{figures/case}
\begin{definition}[Skill Schema]
\label{def:skill_schema}
A skill schema is an abstraction $z = \Gamma_\theta(\tau) \in \mathcal{Z}$,
where $\Gamma_\theta: \mathcal{T} \to \mathcal{Z}$ maps a reasoning trace $\tau$ to a
structured, domain-agnostic representation of its procedural steps.
\end{definition}
Table~\ref{tab:skill-schema-example} shows one instantiated schema and its
fields. It specifies what information is stored in a schema, including its
goal, applicability conditions, canonical steps, failure modes, and
compositional links. Figure~\ref{fig:skill_sequence_example} illustrates how a concrete problem is paired with a transported schema to form a schema-conditioned SFT example.

\begin{definition}[Schema Transportability]
\label{def:transportability}
A new example with schema $z = \Gamma_{\theta_{t-1}}(\tau)$ is
$\epsilon$-\emph{schema-transportable} with respect to a nonempty schema memory
$\mathcal{B}$ if
\begin{equation}
    \mathrm{Dist}\!\left(z,\,\mathrm{NN}_{\mathcal{B}}(z)\right)\leq\epsilon,
    \qquad
    \mathrm{NN}_{\mathcal{B}}(z) \in \arg\min_{z'\in\mathcal{B}}
    \mathrm{Dist}(z,\,z'),
\end{equation}
where $\mathrm{NN}_{\mathcal{B}}(z)$ denotes the nearest neighbor of $z$ in
the schema memory $\mathcal{B}$ under $\mathrm{Dist}$, with ties broken
deterministically. If $\mathcal{B}=\varnothing$, the example is treated as
non-transportable.
\end{definition}

Transportability determines how new examples interact with the schema memory
$\mathcal{B}_t$: transportable examples are attached to an existing procedure,
while non-transportable examples introduce a new one. The maintenance operations
on $\mathcal{B}_t$ are detailed in Section~\ref{sec:method}.

%% file: figures/case.tex
\begin{figure}[t]
\centering
\small
\setlength{\fboxsep}{6pt}
\fbox{\begin{minipage}{0.95\linewidth}
\textbf{Input:} problem $q$, transported skill schema $\widetilde z$, target response $y$. \vspace{1mm}

\textbf{[Problem $q$]} Solve for $u$: $\displaystyle \frac{u}{2}+3=7$. \vspace{1mm}

\textbf{[Transported skill schema $\widetilde z$]}
\texttt{Algebraic Substitution and Equation Solving}.
\textit{Canonical steps:} rewrite the relation; isolate the variable; solve; verify the result. \vspace{1mm}

\textbf{[Augmented training prompt]}
\begin{quote}\vspace{-1mm}
Problem: Solve for $u$: $\displaystyle \frac{u}{2}+3=7$.
Relevant skill: Algebraic Substitution and Equation Solving.
Use the following procedure: rewrite the relation, isolate the variable, solve, and verify the result.
\vspace{-1mm}\end{quote}

\textbf{[Target response $y$]}
\begin{quote}\vspace{-1mm}
Subtract $3$ from both sides to get $\displaystyle \frac{u}{2}=4$.
Multiply both sides by $2$ to get $u=8$.
Check: $\displaystyle \frac{8}{2}+3=4+3=7$, so the solution is correct.
\vspace{-1mm}\end{quote}
\end{minipage}}
\caption{\small
A schema-conditioned SFT training example in SST.
The setup is identical to standard SFT except that the input prompt is augmented with a transported skill schema $\widetilde z$; the supervision target remains the gold response $y$ only.
}
\label{fig:skill_sequence_example}
\vspace{-3mm}
\end{figure}

%% file: ours.tex
\section{Skill Schema Transport}
\label{sec:method}

We propose \emph{Skill Schema Transport} (SST), a post-training framework that connects new-domain knowledge to procedures learned from earlier domains, shown in Algorithm~\ref{alg:sst}. Rather than constraining the model to reproduce previous outputs, SST uses skill schemas as procedural descriptors, allowing a new-domain example to be interpreted through a reusable skill whenever its schema matches one stored in memory.

\textbf{Schema Extraction.} For each training example $(x, y, \tau) \sim
\mathcal{D}_t$, the reasoning trace $\tau$ is the ground-truth solution
provided by the dataset. The skill schema $z = \Gamma_{\theta_{t-1}}(\tau)$
is extracted using the model prior to the current update step via a structured
prompt (detailed in Appendix~\ref{app:prompts}). The extractor
$\Gamma_{\theta_{t-1}}$ takes $\tau$ as input and produces a domain-agnostic
schema $z \in \mathcal{Z}$ specifying the procedural structure of the
solution, including operations, dependencies, and validation checks, while
abstracting away domain-specific terminology and surface details.
\input{figures/skill_schema}
\paragraph{Schema Memory Initialization and Skill Retrieval.}
The schema memory $\mathcal{B}_0$ is initialized from a fixed set of predefined skill families and seed skills before any examples from $\mathcal{D}_1$ are processed.
During update step $t$, SST maintains a working memory $\bar{\mathcal B}$ initialized as $\mathcal{B}_{t-1}$.
For each example in $\mathcal{D}_t$, the system extracts a candidate schema using $\Gamma_{\theta_{t-1}}$, retrieves the nearest skill from $\bar{\mathcal B}$, constructs the transported schema $\widetilde z$, and immediately applies a memory-update operation to $\bar{\mathcal B}$.
After all examples from $\mathcal{D}_t$ are processed, the working memory is committed as $\mathcal{B}_t$.
New skills are written into memory only if the same procedural signature receives at least $n_{\min}=2$ supporting examples, preventing spurious entries from single occurrences.
Schema transportability (Definition~5) is defined conceptually over the schema space $\mathcal Z$.
In implementation, we operationalize this with a problem-to-skill retrieval proxy: the input problem $x$ and each skill entry in the current working memory $\bar{\mathcal B}$ are encoded using \texttt{bge-small-en-v1.5}~\citep{bge_embedding} and ranked by cosine similarity.
Each skill is represented as a canonicalized natural-language block comprising five fields: Skill ID, Family, Goal, When to use, Steps, and Common failure modes.
The top-$k$ retrieved skill blocks ($k=2$ in our experiments) are then used to condition schema extraction and the construction of the transported schema $\widetilde z$.
\input{figures/main_table}

\textbf{Schema-Conditioned Post-Training.} At update step $t$, standard SFT
optimizes
\begin{equation}
    \mathcal{L}_{\mathrm{SFT}}^t(\theta)
    =
    \mathbb{E}_{(x,y)\sim\mathcal{D}_t}
    \left[-\log p_\theta(y\mid x)\right].
\end{equation}
When $\mathcal{D}_t$ is out-of-domain, this objective may cause the model to
acquire isolated domain-specific procedures rather than expressing new
knowledge in terms of procedures already present in its repertoire. SST
processes examples sequentially with a working memory
$\bar{\mathcal{B}}_{t,0}=\mathcal{B}_{t-1}$. For the $j$-th example
$(x_j,y_j,\tau_j)$, let $z_j=\Gamma_{\theta_{t-1}}(\tau_j)$ and assign
\begin{equation}
\label{eq:transported_schema}
    \widetilde{z}_j
    =
    \begin{cases}
    \mathrm{NN}_{\bar{\mathcal{B}}_{t,j-1}}(z_j),
    & \text{if } \bar{\mathcal{B}}_{t,j-1}\neq\emptyset \text{ and }
      \mathrm{Dist}\!\left(
        z_j,\mathrm{NN}_{\bar{\mathcal{B}}_{t,j-1}}(z_j)
      \right)
      \leq\epsilon,\\[6pt]
    z_j, & \text{otherwise}.
    \end{cases}
\end{equation}
Here $\bar{\mathcal{B}}_{t,j-1}$ is the current working memory before processing
example $j$, and $\mathrm{NN}_{\bar{\mathcal{B}}_{t,j-1}}(z_j)$ denotes the
nearest neighbor of $z_j$ in that memory under $\mathrm{Dist}$. This yields the
schema-conditioned dataset
\[
    \mathcal{D}_t^{\mathrm{SST}}
    =
    \{(x_j,\widetilde{z}_j,y_j)\}_{j=1}^{|\mathcal{D}_t|}
\]
and objective
\begin{equation}
\label{eq:sst_objective}
    \mathcal{L}_{\mathrm{SST}}^t(\theta)
    =
    \mathbb{E}_{(x,\widetilde{z},y)\sim\mathcal{D}_t^{\mathrm{SST}}}
    \left[-\log p_\theta(y\mid x,\widetilde{z})\right].
\end{equation}
Here $\widetilde{z}$ serves solely as input context and is not a supervision
target: the model is trained to generate $y$ conditioned on both the problem
$x$ and the retrieved procedural context $\widetilde{z}$, but is not
explicitly trained to predict $\widetilde{z}$ itself. Thus $x$ provides
instance-specific content, $y$ provides the target response, and $\widetilde{z}$
provides the procedural coordinates through which the new example is expressed.
\input{algo/sst}

\paragraph{Schema Memory Maintenance.}
The schema memory $\mathcal{B}_t$ evolves across update steps without requiring replay access to $\mathcal{D}_{<t}$.
Memory updates occur per example, not as a separate post-hoc phase.
During stage $t$, all schema extraction and memory-operation proposals are produced by the pre-update model $\Gamma_{\theta_{t-1}}$; the updated model $\theta_t$ is obtained only after the schema-conditioned dataset $\widetilde{\mathcal D}_t$ has been constructed.
After $\widetilde z$ is assigned for a given example, $\Gamma_{\theta_{t-1}}$ produces an update proposal and applies either a Reuse no-op or one of four memory-modifying operations.
\emph{Revise:} if the proposed schema matches an existing entry within the
same skill family, the stored schema is updated with new keywords, canonical
steps, or failure modes.
\emph{Create:} if $\bar{\mathcal B}=\varnothing$ or
$\mathrm{Dist}(z,\mathrm{NN}_{\bar{\mathcal B}}(z))>\epsilon$, the proposed
schema is a candidate for a new entry; it is written
into $\mathcal{B}_t$ only when the same procedural signature accumulates at
least $n_{\mathrm{min}}$ supporting examples and the extractor's confidence
exceeds a threshold $\delta_{\mathrm{create}}$.
\emph{Merge:} if two schemas share the same skill family and satisfy
$\mathrm{Dist}(z, z') \leq \delta_{\mathrm{merge}}$, they are consolidated,
retaining the more-retrieved entry.
\emph{Drop:} a schema whose utility score $u(z)$ falls below
$\delta_{\mathrm{drop}}$ is removed, provided it has been retrieved at least
$k_{\mathrm{min}}$ times. The utility score aggregates behavioral signals
including source correctness, retrieval support, retrieval precision. Implementation details and default hyperparameters are provided in Appendix~\ref{app:implementation_details}.
Skill family membership is a discrete field assigned by the extractor at
creation time from a fixed family list; it governs which skills are candidates
for Revise and Merge, while $\mathrm{Dist}$ governs the within-family
similarity threshold for Merge.
Together, these operations allow $\mathcal{B}_t$ to grow, refine, and prune itself as new domains are encountered.
A local linearized analysis in Appendix~\ref{app:sst_theory_details} shows that, under a schema-separable approximation, schema-conditioned directions need not increase the minimum shared-parameter displacement required by an update.

%% file: figures/skill_schema.tex
\begin{wraptable}{r}{0.56\textwidth}
\vspace{-4mm}
\centering
\scriptsize
\renewcommand{\arraystretch}{1.05}
\setlength{\tabcolsep}{3pt}

\caption{Example of a structured skill schema used.}
\label{tab:skill-schema-example}
\vspace{2mm}
\begin{tabular}{>{\bfseries\raggedright\arraybackslash}p{1.85cm}
                >{\raggedright\arraybackslash}p{5.0cm}}
\toprule
\rowcolor{gray!15}
Field & Description \\
\midrule
Skill Name & Algebraic Substitution and Equation Solving \\

Goal & Solve algebra problems via rewriting, substitution, and branch verification. \\

\rowcolor{gray!7}
Applicability & Equations, expressions, radicals, and symbolic substitutions. \\

Keywords & equation, substitute, solve, factor, expression \\

\rowcolor{gray!7}
Structural Cues & functional relations; inequalities or intervals \\

Output Format & integer, fraction, radical, interval \\

\rowcolor{gray!7}
Canonical Steps &
(1) Rewrite the relation; (2) substitute or factor strategically;
(3) solve candidate values; (4) check constraints. \\

Failure Modes & sign errors; dropped branches; unchecked extraneous solutions \\

\rowcolor{gray!7}
Compositional Link & exact form and answer realization \\
\bottomrule
\end{tabular}
\vspace{-6mm}
\end{wraptable}

%% file: figures/main_table.tex
\begin{table*}[t]
\centering
\scriptsize
\setlength{\tabcolsep}{2.8pt}
\caption{\small
\textbf{Main streaming results.}
All numbers are percentages after the full stream.
\textit{Stream Avg.}: average over streamed domains.
\textit{Forget.}: average max-drop over prior streamed domains.
\textit{OOD Avg.}: average over MMLU, GSM8K, PIQA, CSQA, and SciQ (flex score for GSM8K).
Sequential baseline is Seq-SFT for MATH and Seq-RL for agent tasks.
SST uses the same update objective as its sequential counterpart, conditioned on skill schemas.
Forgetting is not applicable to base models.
}
\label{tab:main_streaming}
\vspace{1mm}

\resizebox{\textwidth}{!}{
\begin{tabular}{llrrrrrr|rrrrrr}
\toprule
\multicolumn{14}{c}{\textbf{MATH stream: Algebra $\to$ Counting \& Probability $\to$ Geometry $\to$ Intermediate Algebra}} \\
\midrule
& & \multicolumn{6}{c|}{\textbf{Streamed domains}}
& \multicolumn{6}{c}{\textbf{OOD generalization}} \\
\cmidrule(lr){3-8}\cmidrule(lr){9-14}
Model & Method
& Alg. & Count. & Geom. & Inter.
& Stream Avg. $\uparrow$ & Forget. $\downarrow$
& MMLU & GSM8K & PIQA & CSQA & SciQ & OOD Avg. $\uparrow$ \\
\midrule
Qwen-3B & Base
& 44.99 & 28.48 & 22.34 & 9.19
& 26.25 & --
& 65.45 & 63.00 & 77.69 & 75.54 & 92.50 & 74.84 \\
Qwen-3B & Seq-SFT
& 26.62 & 21.31 & 18.16 & 13.73
& 19.96 & 42.41
& 57.18 & 7.58 & 74.43 & 61.59 & 93.80 & 58.92 \\
\rowcolor{fluyellow!25} Qwen-3B & SST
& 40.27 & 24.68 & 20.46 & 15.06
& \textbf{25.12} & \textbf{7.10}
& 65.28 & 39.20 & 77.58 & 80.56 & 97.70 & \textbf{72.06} \\
\midrule
Llama-3B & Base
& 8.17 & 5.70 & 11.27 & 9.63
& 8.69 & --
& 62.12 & 67.48 & 73.90 & 73.96 & 94.20 & 74.33 \\
Llama-3B & Seq-SFT
& 30.03 & 20.16 & 18.66 & 12.33
& 20.30 & 4.39
& 60.43 & 46.40 & 73.72 & 71.01 & 95.10 & 69.33 \\
\rowcolor{fluyellow!25} Llama-3B & SST
& 31.76 & 21.73 & 20.88 & 16.94
& \textbf{22.83} & \textbf{4.09}
& 60.58 & 61.84 & 74.32 & 70.60 & 94.30 & \textbf{72.33} \\
\bottomrule
\end{tabular}
}

\vspace{1.5mm}

\resizebox{\textwidth}{!}{
\begin{tabular}{llrrrrr|rrrrrr}
\toprule
\multicolumn{13}{c}{\textbf{Agent stream: AlfWorld $\to$ WebShop $\to$ Search}} \\
\midrule
& & \multicolumn{5}{c|}{\textbf{Streamed domains}}
& \multicolumn{6}{c}{\textbf{OOD generalization}} \\
\cmidrule(lr){3-7}\cmidrule(lr){8-13}
Model & Method
& AlfWorld & WebShop & Search QA
& Stream Avg. $\uparrow$ & Forget. $\downarrow$
& MMLU & GSM8K & PIQA & CSQA & SciQ & OOD Avg. $\uparrow$ \\
\midrule
Qwen-3B & Base
& 19.79 & 42.75 & 53.72
& 38.75 & --
& 65.45 & 63.00 & 77.69 & 75.54 & 92.50 & 74.84 \\
Qwen-3B & Seq-RL
& 66.97 & 91.33 & 60.81
& 73.04 & 1.47
& 65.42 & 63.00 & 77.69 & 78.54 & 94.70 & \textbf{75.87} \\
\rowcolor{fluyellow!25} Qwen-3B & SST
& 77.94 & 94.12 & 64.32
& \textbf{78.79} & \textbf{0.77}
& 65.47 & 63.61 & 77.15 & 78.13 & 94.50 & 75.77 \\
\midrule
Llama-3B & Base
& 20.99 & 18.43 & 49.59
& 29.67 & --
& 62.12 & 67.48 & 73.90 & 73.96 & 94.20 & 74.33 \\
Llama-3B & Seq-RL
& 71.93 & 91.76 & 59.81
& 74.50 & 1.47
& 61.84 & 67.32 & 73.23 & 73.71 & 94.30 & 74.08 \\
\rowcolor{fluyellow!25} Llama-3B & SST
& 77.94 & 92.55 & 62.81
& \textbf{77.77} & \textbf{1.27}
& 61.95 & 68.54 & 73.50 & 72.56 & 94.30 & \textbf{74.17} \\
\bottomrule
\end{tabular}
}
\vspace{-2mm}
\end{table*}

%% file: algo/sst.tex
\begin{wrapfigure}{r}{0.5\columnwidth}
\begin{minipage}{0.48\columnwidth}
\vspace{-4mm}
\begin{algorithm}[H]
\caption{\small Skill Schema Transport (SST)}
\label{alg:sst}
\footnotesize
\begin{algorithmic}[1]

\Require Pre-trained model $\mathcal{M}_{\theta_0}$
\Require Update stream $\mathcal{D}_1,\ldots,\mathcal{D}_T$
\Require Distance $\mathrm{Dist}$; thresholds $\epsilon$, $\delta_{\mathrm{create}}$, $\delta_{\mathrm{merge}}$, $\delta_{\mathrm{drop}}$, $n_{\min}$, $k_{\min}$
\Ensure Updated model $\mathcal{M}_{\theta_T}$, schema memory $\mathcal{B}_T$

\State Initialize $\mathcal{B}_0$ from seed skill families

\For{$t = 1, \ldots, T$}
    \State $\bar{\mathcal{B}} \leftarrow \mathcal{B}_{t-1}$; \quad $\widetilde{\mathcal{D}}_t \leftarrow \emptyset$
    \For{each $(x, y, \tau) \sim \mathcal{D}_t$}
        \State $z \leftarrow \Gamma_{\theta_{t-1}}(\tau)$
        \State $z^\star \leftarrow \arg\min_{z' \in \bar{\mathcal{B}}} \mathrm{Dist}(z, z')$
        \State $\widetilde z \leftarrow z^\star$ if $\mathrm{Dist}(z,z^\star) \le \epsilon$ else $z$
        \State Add $(x,\widetilde z,y)$ to $\widetilde{\mathcal{D}}_t$
        \State $\bar{\mathcal{B}} \leftarrow \mathrm{MemUpdate}_{\Gamma_{\theta_{t-1}}}(\bar{\mathcal{B}}, z, \widetilde z)$
    \EndFor
    \State $\mathcal{B}_t \leftarrow \bar{\mathcal{B}}$
    \State $\theta_t \leftarrow \arg\min_\theta\ \mathcal{L}_{\mathrm{SST}}^t(\theta;\widetilde{\mathcal{D}}_t)$
\EndFor

\end{algorithmic}
\end{algorithm}
\end{minipage}
\vspace{-4mm}
\end{wrapfigure}

%% file: exp.tex
\textbf{Models and datasets.}
We evaluate SST on \texttt{Llama-3.2-3B-Instruct} ~\citep{grattafiori2024llama} and \texttt{Qwen2.5-3B-Instruct}~\citep{qwen2.5}.
We consider two streaming settings: a mathematical reasoning stream~\citep{hendrycksmath2021} with four sequential domains (Algebra $\to$ Counting \& Probability $\to$ Geometry $\to$ Intermediate Algebra), and an agent-task stream with three sequential domains (AlfWorld $\to$ WebShop $\to$ search-augmented QA~\citep{shridhar2020alfworld,yao2022webshop,xia2026skillrl}).

\textbf{Training and inference.}
At each stream step, methods may only access the current domain plus any information explicitly permitted by their design.
SST applies the same base objective as the sequential baseline (SFT for MATH, RL for agent tasks), but rewrites updates through retrieved skill schemas.
All main evaluations are retrieval-free at inference time, so reported gains reflect internalized knowledge rather than test-time prompting.

\textbf{Evaluation.}
After each step we evaluate on all domains seen so far. We report average final performance and average max-drop forgetting. For a domain $d$ introduced at step $t_d$, let $A_t(d)$ denote its accuracy after update step $t$. We define
\[
    F(d)=\max_{t_d\le t\le T} A_t(d)-A_T(d).
\]
We average $F(d)$ over prior streamed domains, excluding the final domain because it is not followed by a later update.
For the MATH stream, we additionally report transfer to held-out domains (Number Theory, Prealgebra, Precalculus) and preservation on general benchmarks (MMLU~\citep{hendrycks2020measuring}, GSM8K~\citep{cobbe2021training}, PIQA~\citep{bisk2020piqa}, CommonsenseQA~\citep{talmor2019commonsenseqa}, SciQ~\citep{welbl2017crowdsourcing}); the same benchmarks are used for the agent stream.

\textbf{Baselines.}
We compare against \textsc{SeqTrain} (sequential, no mitigation), \textsc{Replay} (fixed-count rehearsal), and \textsc{Regularization} (L2 parameter anchoring to the base model).
Unlike these, SST does not anchor to past examples or states; instead, it conditions new updates on reusable procedural schemas.
We also include non-streaming experiments, Appendix~\ref{app:single_domain_gain_drop} shows that the gain-drop pattern also holds for independent single-domain updates.

\input{exps/main_results}
\input{exps/baseline}

\input{exps/underfit}

\input{exps/heatmap_case}
\input{exps/streaming}

%% file: exps/main_results.tex
\noindent\textbf{RQ1: Does SST reduce forgetting while preserving out-of-stream performance?}
Table~\ref{tab:main_streaming} summarizes results across both streaming settings and backbones.
In the MATH stream, Seq-SFT causes severe forgetting on Qwen3B (avg.\ max-drop $42.41$ pts, final streamed average $19.96\%$); SST reduces max-drop to $7.10$ pts and raises the streamed average to $25.12\%$, while recovering general benchmark performance from $58.92\%$ to $72.06\%$.
Llama3B follows the same direction: streamed average improves from $20.30\%$ to $22.83\%$ and general benchmarks from $69.33\%$ to $72.33\%$.
In the agent-task stream, SST improves final average task performance over Seq-RL by $5.75$ pts on Qwen3B ($73.04\%\to78.79\%$) and $3.27$ pts on Llama3B ($74.50\%\to77.77\%$), with forgetting reduced and general benchmark scores preserved.
Across both streams, schema-conditioned updates improve streaming adaptation without increasing interference with out-of-stream capabilities.
Appendix~\ref{app:rq1} provides the corresponding max-drop decomposition by domain.

%% file: exps/baseline.tex
\begin{figure*}[t]
    \centering
    \includegraphics[width=0.92\textwidth]{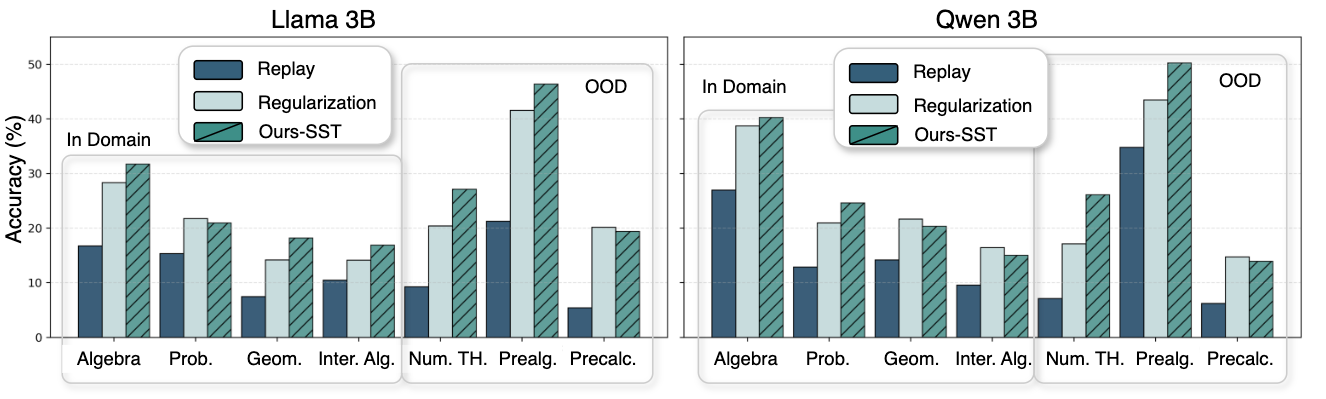}
    \caption{\small
    \textbf{Baseline comparison on streaming MATH.}
    Final accuracy after the four-domain stream. Left: seen domains. Right: held-out related MATH domains. SST yields a favorable retention-transfer trade-off on both models.
    }
    \label{fig:streaming_baselines}
    \vspace{-8mm}
\end{figure*}

\begin{wrapfigure}{r}{0.5\textwidth}
    \centering
    \vspace{-3mm}
    \includegraphics[width=0.49\textwidth]{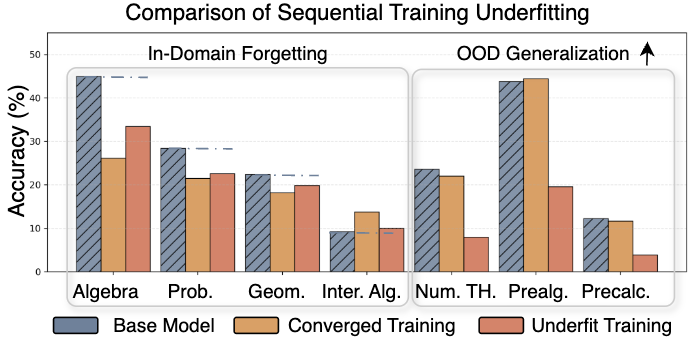}
    \caption{\small
    \textbf{Underfitting does not explain SST's retention.}
    Underfit SFT retains more earlier-domain accuracy only by failing to learn the final domain and degrading related OOD generalization.
    }
    \label{fig:underfit_control}
    \vspace{-2mm}
\end{wrapfigure}

\noindent\textbf{RQ2: How does SST compare with standard forgetting-mitigation baselines?}
Replay and regularization preserve prior behavior by anchoring updates to earlier examples or model states.
SST instead maps new examples into a reusable skill-schema space before updating, encouraging the model to relate new domains to previously acquired procedures.
Figure~\ref{fig:streaming_baselines} reports final accuracy after the four-domain MATH stream.
Across both backbones, SST matches or improves over replay and regularization on streamed domains, with the largest advantage on held-out related MATH domains, where preserving old outputs is insufficient and transferring reusable procedures is required.
This suggests schema transport does more than constrain model drift: by organizing updates around reusable skills, SST yields a favorable retention-transfer trade-off.

%% file: exps/underfit.tex
\noindent\textbf{RQ3: Can forgetting be avoided simply by underfitting the update?}
A simpler explanation for improved retention is weaker optimization: undertrained updates may overwrite earlier domains less.
To test this, we compare converged and underfit sequential SFT on Qwen-3B over the four-domain MATH stream.
As shown in Figure~\ref{fig:underfit_control}, underfit SFT preserves some earlier-domain accuracy but at the cost of Intermediate Algebra performance and held-out MATH generalization (Number Theory, Prealgebra, Precalculus), a retention, plasticity trade-off rather than a solution to continual adaptation.
SST targets a different mechanism: by conditioning updates on reusable skill schemas, it preserves prior procedures while still fitting new examples.

%% file: exps/heatmap_case.tex
\noindent\textbf{RQ4: Do later updates overwrite prior skills, or reinforce shared procedures?}
Final-checkpoint accuracy captures the net effect of the stream but not how skills evolve during adaptation.
We evaluate each SST checkpoint on all streamed MATH domains after every update, testing whether shared procedural structure allows later updates to recover or reinforce earlier skills.
For each prior domain $d$, max-drop forgetting is $F(d)=\max_{t_d \leq t \leq T} A_t(d)-A_T(d)$.

\begin{wrapfigure}{r}{0.65\textwidth}
    \centering
    \vspace{-4mm}
    \includegraphics[width=0.64\textwidth]{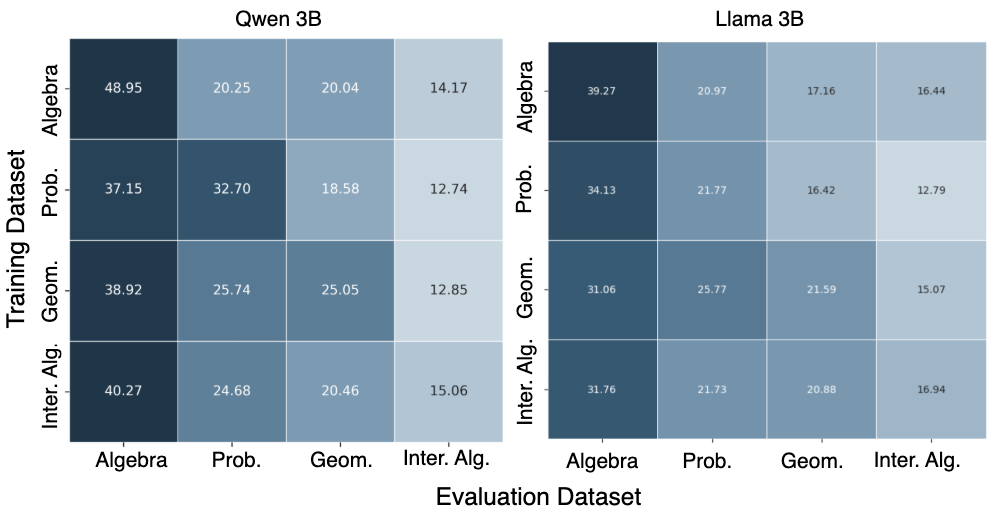}
    \caption{\small
    \textbf{SST checkpoint trajectories on streaming MATH.}
    Rows are checkpoints after each update and columns are evaluation domains.
    Partial recoveries after later updates are consistent with later domains reinforcing shared procedures.
    }
    \label{fig:streaming_trajectory_heatmap}
    \vspace{-2mm}
\end{wrapfigure}

Figure~\ref{fig:streaming_trajectory_heatmap} shows that SST keeps forgetting bounded: Qwen-3B drops by $8.68$, $8.02$, and $4.59$ points on Algebra, Counting \& Probability, and Geometry (avg.\ $7.10$ pts), Llama-3B drops by $7.51$, $4.04$, and $0.71$ points on Algebra, Counting \& Probability, and Geometry, respectively, with an average max-drop of $4.09$ points.
Crucially, trajectories are not purely monotonic: Qwen-3B's Algebra accuracy partially recovers from $37.15\%$ to $40.27\%$ by the final checkpoint, consistent with later examples activating schemas shared with earlier domains.
These trajectories are consistent with SST mitigating forgetting not by freezing prior behavior, but by allowing skills to be reused and sometimes reinforced as the stream grows.

%% file: exps/streaming.tex
\begin{figure}[b]
    \centering
    \vspace{-2mm}
    \includegraphics[width=0.65\textwidth]{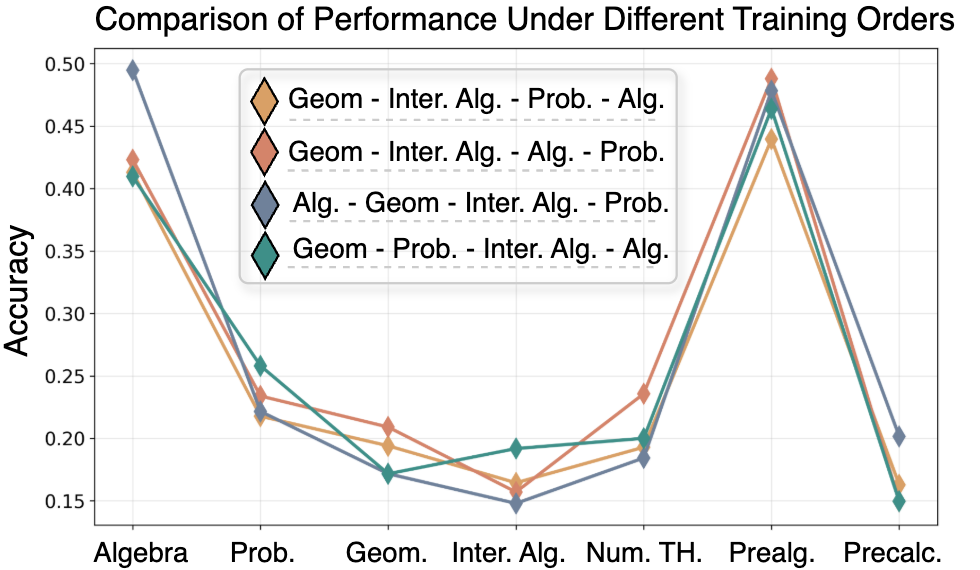}
    \caption{\small
    \textbf{Robustness to stream order.}
    SST is trained on four permutations of the same MATH stream and evaluated after the final update.
    Performance is broadly consistent across orders, indicating that SST does not rely on a single favorable domain sequence.
    }
    \label{fig:stream_order}
    \vspace{-3mm}
\end{figure}

\noindent\textbf{RQ5: Is SST robust to the order in which domains arrive?}
Streaming adaptation is inherently order-sensitive: a method may appear effective only because the domains are presented in a favorable sequence.
We therefore test SST on four permutations of the same MATH domains using the same training budget and evaluation protocol, and evaluate the final checkpoint on both streamed and held-out domains.
Figure~\ref{fig:stream_order} shows that SST yields a broadly consistent performance profile across orders.
Although some variation remains, especially on Algebra and Number Theory, no tested permutation causes a collapse on streamed domains, and held-out domains such as Prealgebra remain comparatively strong.
This finite-order stress test does not prove full order invariance, but it reduces the concern that SST's retention-transfer trade-off is driven by a single hand-picked curriculum.
Instead, schema transport appears to provide a stable procedural interface for integrating new domains as they arrive.

%% file: appendix.tex
\input{appen/add_rq}
\input{appen/proof}
\input{appen/extrarq1}

\input{appen/implement}

\input{appen/prompt}
\input{appen/limitation}
\input{appen/border_impact}

%% file: appen/add_rq.tex
\begin{figure}[H]
    \centering
    \vspace{-4mm}
    \includegraphics[width=0.64\textwidth]{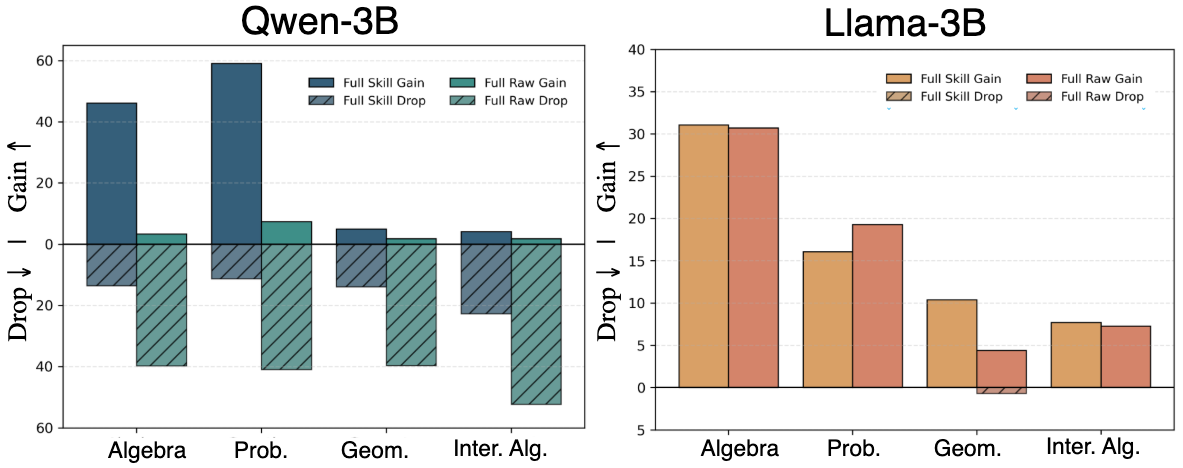}
    \caption{\small
\textbf{Single-domain gain--drop diagnostic.}
Each MATH domain is trained independently from the same base checkpoint.
Upward bars show target-domain gain, and downward hatched bars show average drop on the other streamed MATH domains.
Compared with \textsc{Full Raw}, \textsc{Full Skill} yields a stronger gain--drop trade-off on both Qwen-3B and Llama-3B.
}
    \label{fig:single_domain_gain_drop}
    \vspace{-3mm}
\end{figure}

\section{Single-Domain Gain-Drop Diagnostic}
\label{app:single_domain_gain_drop}

The main experiments evaluate SST in streaming settings, where updates arrive sequentially and later domains may interact with earlier ones.
To isolate the effect of schema conditioning from stream order and cumulative checkpoint dynamics, we additionally run a non-streaming single-domain diagnostic.
For each target domain $d \in \{\text{Algebra}, \text{Counting \& Probability}, \text{Geometry}, \text{Intermediate Algebra}\}$, we initialize from the same base model and train only on examples from $d$.
We compare raw full fine-tuning against skill-conditioned full fine-tuning using the same target-domain data and training budget.
All evaluations in this diagnostic are retrieval-free at inference time; skill schemas are used only as training-time context.

Let $A_0(e)$ denote the base-model accuracy on evaluation domain $e$, and let $A_d^{m}(e)$ denote the accuracy after training on target domain $d$ with method $m \in \{\textsc{Raw}, \textsc{Skill}\}$.
We measure the target-domain gain and cross-domain drop as
\[
\mathrm{Gain}_{m}(d) = A_d^{m}(d) - A_0(d),
\]
\[
\mathrm{Drop}_{m}(d)
=
\frac{1}{|\mathcal E|-1}
\sum_{e \in \mathcal E \setminus \{d\}}
\left[A_0(e) - A_d^{m}(e)\right]_+,
\]
where $\mathcal E$ is the set of streamed MATH evaluation domains and $[\cdot]_+$ counts only performance decreases.
In Figure~\ref{fig:single_domain_gain_drop}, gains are plotted upward and drops are plotted downward as $-\mathrm{Drop}$.

Figure~\ref{fig:single_domain_gain_drop} shows that schema conditioning improves the gain-drop trade-off even in isolated single-domain updates.
Starting from the same base checkpoint, raw fine-tuning often provides weaker target-domain gains and/or larger drops on the remaining streamed MATH domains.
In contrast, skill-conditioned fine-tuning achieves stronger or comparable in-domain gains with smaller cross-domain degradation.
This complements the streaming results by showing that SST's advantage is not only a consequence of cumulative sequential adaptation, but also appears at the level of individual domain updates.

%% file: appen/proof.tex
\section{Additional Details for the Linearized SST Analysis}
\label{app:sst_theory_details}

\subsection{A Linearized Drift Bound}
\label{subsec:linearized_bound}

We give a local linearized argument for why schema conditioning can reduce retention drift. The result concerns minimum shared-parameter displacement in a first-order logit-fitting problem, not global convergence. Assume the model decomposes into shared parameters $\theta_{\mathrm{sh}}$
and schema-specific parameters $\theta_{\mathrm{sc}}$, with logits
\begin{equation}
    g_\theta(x,s)
    =
    f\!\left(W\phi_{\theta_{\mathrm{sh}}}(x)+Ue_{\theta_{\mathrm{sc}}}(s)\right).
\end{equation}
Under the following assumption, retention diagnostics evaluated with $s_0$
depend only on $\theta_{\mathrm{sh}}$ to first order.

\begin{assumption}[Null-schema inactivity]
\label{ass:null_schema}
The schema embedding satisfies $e_{\theta_{\mathrm{sc}}}(s_0)\equiv 0$
for all $\theta_{\mathrm{sc}}$, so that
\begin{equation}
    \left.
    \frac{\partial g_\theta(x,s_0)}
    {\partial \theta_{\mathrm{sc}}}
    \right|_{\bar\theta}
    = 0.
\end{equation}
\end{assumption}

At step $t$, let $\bar\theta=\theta_{t-1}$
and let $r\in\mathbb{R}^m$ be the desired first-order logit change for a
mini-batch from $\mathcal{D}_t$. The linearized updates are
\begin{equation}
\label{eq:linearized_changes}
    \Delta^{\mathrm{SFT}} = A\delta_{\mathrm{sh}},
    \qquad
    \Delta^{\mathrm{SST}} = A\delta_{\mathrm{sh}}+S\delta_{\mathrm{sc}},
\end{equation}
where $A$, $S$ are Jacobians w.r.t.\ shared and schema-specific parameters.
Let $P_A=AA^\dagger$, $r_A=P_Ar$, $\widetilde{S}=P_AS$, so that $r_A$ is
the component of $r$ fitable by shared parameters and $\widetilde{S}$
captures how schema-specific directions can offload residual from
$\theta_{\mathrm{sh}}$.

Define the minimum shared-parameter displacement under each objective as
\begin{equation}
    \rho_{\mathrm{SFT}}^2
    =
    \min_{\delta_{\mathrm{sh}}}
    \left\{\|\delta_{\mathrm{sh}}\|_2^2: A\delta_{\mathrm{sh}}=r_A\right\},
    \qquad
    \rho_{\mathrm{SST}}^2
    =
    \min_{\delta_{\mathrm{sh}},\delta_{\mathrm{sc}}}
    \left\{\|\delta_{\mathrm{sh}}\|_2^2:
    A\delta_{\mathrm{sh}}+\widetilde{S}\delta_{\mathrm{sc}}=r_A\right\}.
\end{equation}
Since retention diagnostics under $s_0$ depend only on $\theta_{\mathrm{sh}}$,
$\delta_{\mathrm{sc}}$ is free in $\rho_{\mathrm{SST}}^2$.

\begin{theorem}[SST cannot increase the minimum required shared-parameter change]
\label{thm:sst_shared_norm}
Let $M=(AA^\top)^\dagger$ and
$\widetilde{\mathcal{C}}=\operatorname{col}(\widetilde{S})
\subseteq\operatorname{col}(A)$. Then
\begin{equation}
\label{eq:rho_sst_formula}
    \rho_{\mathrm{SST}}^2
    =
    r_A^\top M r_A
    -
    r_A^\top M \widetilde{S}
    \left(\widetilde{S}^\top M \widetilde{S}\right)^\dagger
    \widetilde{S}^\top M r_A
    \leq
    \rho_{\mathrm{SFT}}^2,
\end{equation}
where $\|u\|_M^2=u^\top M u$ for $u\in\operatorname{col}(A)$. The inequality is strict if and only if
$\widetilde{S}^\top M r_A \neq 0$.
\end{theorem}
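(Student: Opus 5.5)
The plan is to reduce both problems to minimum-norm least-squares computations and then eliminate $\delta_{\mathrm{sc}}$ by a projection argument in the $M$-geometry on $\operatorname{col}(A)$.

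\textbf{Step 1: the SFT value.} Since $r_A=P_Ar\in\operatorname{col}(A)$, the system $A\delta_{\mathrm{sh}}=r_A$ is consistent. Its minimum-norm solution is $\delta_{\mathrm{sh}}=A^\dagger r_A$, which gives
\[
\rho_{\mathrm{SFT}}^2=\|A^\dagger r_A\|_2^2=r_A^\top (A^\dagger)^\top A^\dagger r_A=r_A^\top M r_A .
\]
This uses the identity $(A^\dagger)^\top A^\dagger=(AA^\top)^\dagger$. More generally, for any $v\in\operatorname{col}(A)$,
\[
\min\{\|\delta\|_2^2: A\delta=v\}=v^\top M v=\|v\|_M^2 .
\]
Moreover, $M$ is positive definite when restricted to $\operatorname{col}(A)$.

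\textbf{Step 2: eliminate $\delta_{\mathrm{sc}}$ in the SST problem.} Because $\widetilde S=P_AS$ has columns in $\operatorname{col}(A)$, for every fixed $\delta_{\mathrm{sc}}$ the vector $v=r_A-\widetilde S\delta_{\mathrm{sc}}$ lies in $\operatorname{col}(A)$. Hence the constraint is always feasible in $\delta_{\mathrm{sh}}$, and by Step 1 the inner minimum over $\delta_{\mathrm{sh}}$ equals $\|r_A-\widetilde S\delta_{\mathrm{sc}}\|_M^2$. Therefore
\[
\rho_{\mathrm{SST}}^2=\min_{\delta_{\mathrm{sc}}}\ \|r_A-\widetilde S\delta_{\mathrm{sc}}\|_M^2 .
\]
This is a weighted least-squares problem, and its value is the squared $M$-distance from $r_A$ to $\widetilde{\mathcal C}$.

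\textbf{Step 3: solve the least-squares problem.} The normal equations are
\[
\widetilde S^\top M\widetilde S\,\delta_{\mathrm{sc}}=\widetilde S^\top M r_A .
\]
They are consistent because
\[
\operatorname{col}(\widetilde S^\top M\widetilde S)=\operatorname{col}(\widetilde S^\top M^{1/2})\ni \widetilde S^\top M r_A ,
\]
so $\delta_{\mathrm{sc}}^\star=(\widetilde S^\top M\widetilde S)^\dagger\widetilde S^\top M r_A$ is a minimizer. Expanding the quadratic at $\delta_{\mathrm{sc}}^\star$ and using the identity $G^\dagger G G^\dagger=G^\dagger$ for $G=\widetilde S^\top M\widetilde S$ gives the formula in \eqref{eq:rho_sst_formula}.

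Equivalently, write $\widetilde S=A B$ for some matrix $B$, which is possible since $\operatorname{col}(\widetilde S)\subseteq\operatorname{col}(A)$. Set $u=M^{1/2}r_A$ and $K=M^{1/2}\widetilde S$. Then the subtracted term is $u^\top K(K^\top K)^\dagger K^\top u=\|\Pi_K u\|^2$, where $\Pi_K$ is the orthogonal projector onto $\operatorname{col}(K)$. So
\[
\rho_{\mathrm{SST}}^2=\|u\|^2-\|\Pi_Ku\|^2\le\|u\|^2=\rho_{\mathrm{SFT}}^2 ,
\]
which is the Pythagorean inequality.

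\textbf{Step 4: the strictness criterion.} Equality holds iff $\Pi_Ku=0$, which is equivalent to $K^\top u=0$, that is, $\widetilde S^\top M r_A=0$. This is exactly the stated criterion.

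The main obstacle I expect is bookkeeping with pseudo-inverses and the possibly singular matrix $M$. Specifically, one must verify that $\|\cdot\|_M$ is a genuine norm on $\operatorname{col}(A)$ and that $M^{1/2}$ is injective there, so that the square-root factorization and the projector argument are legitimate. I would handle this through an SVD of $A$, which makes $M$ block-diagonal with invertible block $\Sigma^{-2}$ on $\operatorname{col}(A)$.
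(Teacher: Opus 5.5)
Your proposal is correct and follows essentially the same route as the paper's proof: it computes the minimum shared-parameter norm as $\|v\|_M^2$ via $(A^\dagger)^\top A^\dagger=(AA^\top)^\dagger$, eliminates $\delta_{\mathrm{sc}}$ to get the squared $M$-distance from $r_A$ to $\widetilde{\mathcal{C}}$, and concludes by the $M$-orthogonal projection and Pythagoras. Your normal-equation and $M^{1/2}$-whitening check only makes explicit the projection formula that the paper asserts directly, and the factorization $\widetilde S = AB$ you mention is never used.
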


\begin{proof}
If $A=0$, then $r_A=0$ and the claim is trivial. Otherwise, let
$A=U\Sigma V^\top$ be a compact SVD. Then
$M=(AA^\top)^\dagger=U\Sigma^{-2}U^\top$. Hence
$\langle u,v\rangle_M=u^\top Mv$ is an inner product on
$\operatorname{col}(A)$: for every nonzero $u=Uw\in\operatorname{col}(A)$,
we have $u^\top Mu=w^\top\Sigma^{-2}w>0$.

For any $b\in\operatorname{col}(A)$, the minimum-norm solution of
$A\delta_{\mathrm{sh}}=b$ is $A^\dagger b$. Therefore
\begin{equation}
    \min_{\delta_{\mathrm{sh}}}
    \left\{
        \|\delta_{\mathrm{sh}}\|_2^2:
        A\delta_{\mathrm{sh}}=b
    \right\}
    =
    \|A^\dagger b\|_2^2
    =
    b^\top (A^\dagger)^\top A^\dagger b
    =
    b^\top (AA^\top)^\dagger b
    =
    \|b\|_M^2.
\end{equation}
The identity $(A^\dagger)^\top A^\dagger=(AA^\top)^\dagger$ follows directly from the same SVD. Taking $b=r_A\in\operatorname{col}(A)$ gives
$\rho_{\mathrm{SFT}}^2=\|r_A\|_M^2$.

For SST, write $s=\widetilde{S}\delta_{\mathrm{sc}}\in\widetilde{\mathcal{C}}$. For any fixed $s$, the remaining residual assigned to shared parameters is $r_A-s\in\operatorname{col}(A)$, and the minimum shared-parameter norm is $\|r_A-s\|_M^2$ by the previous paragraph. Minimizing over $s\in\widetilde{\mathcal{C}}$ gives
\begin{equation}
    \rho_{\mathrm{SST}}^2
    =
    \min_{s\in\widetilde{\mathcal{C}}}\|r_A-s\|_M^2.
\end{equation}
Thus $\rho_{\mathrm{SST}}^2$ is the squared $M$-distance from $r_A$ to
$\widetilde{\mathcal{C}}$. The $M$-orthogonal projection onto
$\widetilde{\mathcal{C}}$ is
\begin{equation}
    P_{\widetilde{\mathcal{C}}}^{M}r_A
    =
    \widetilde{S}
    \left(\widetilde{S}^\top M \widetilde{S}\right)^\dagger
    \widetilde{S}^\top M r_A.
\end{equation}
Therefore,
\begin{align}
    \rho_{\mathrm{SST}}^2
    &=
    \left\|r_A-P_{\widetilde{\mathcal{C}}}^{M}r_A\right\|_M^2 \\
    &=
    \|r_A\|_M^2
    -
    \left\|P_{\widetilde{\mathcal{C}}}^{M}r_A\right\|_M^2 \\
    &=
    r_A^\top M r_A
    -
    r_A^\top M \widetilde{S}
    \left(\widetilde{S}^\top M \widetilde{S}\right)^\dagger
    \widetilde{S}^\top M r_A.
\end{align}
This proves~\eqref{eq:rho_sst_formula}. Since $0\in\widetilde{\mathcal{C}}$,
\begin{equation}
    \rho_{\mathrm{SST}}^2
    =
    \min_{s\in\widetilde{\mathcal{C}}}\|r_A-s\|_M^2
    \leq
    \|r_A\|_M^2
    =
    \rho_{\mathrm{SFT}}^2.
\end{equation}
The inequality is strict exactly when
$P_{\widetilde{\mathcal{C}}}^{M}r_A\neq0$, which is equivalent to
$r_A$ not being $M$-orthogonal to $\widetilde{\mathcal{C}}$, i.e.,
$\widetilde{S}^\top M r_A\neq0$.
\end{proof}

\begin{corollary}[Incremental retention-drift bound]
\label{cor:incremental_drift_bound}
Let $G_{\bar\theta}(\theta;\mathcal{Q})$ be a nonnegative incremental retention
diagnostic evaluated under $s_0$ that depends locally only on
$\theta_{\mathrm{sh}}$, satisfies $G_{\bar\theta}(\bar\theta;\mathcal{Q})=0$,
and is $K_{\mathcal{Q}}$-Lipschitz in $\theta_{\mathrm{sh}}$ near $\bar\theta$.
For the minimum-shared-norm SST update,
\begin{equation}
    G_{\bar\theta}(\bar\theta+\delta^{\mathrm{SST}};\mathcal{Q})
    \leq K_{\mathcal{Q}}\rho_{\mathrm{SST}}
    \leq K_{\mathcal{Q}}\rho_{\mathrm{SFT}}.
\end{equation}
For a raw-reference diagnostic $G^0$, this bounds only the increment
$|G^0(\theta;\mathcal{Q})-G^0(\bar\theta;\mathcal{Q})|$, not
$G^0(\theta;\mathcal{Q})$ itself.
\end{corollary}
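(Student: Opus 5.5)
The plan is to combine Theorem~\ref{thm:sst_shared_norm} with the Lipschitz and locality hypotheses. The argument is essentially three steps: identify what the minimum-shared-norm SST update does to the shared parameters, apply Lipschitz continuity at $\bar\theta$, and then invoke the theorem's inequality.

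\textbf{Step 1: the shared part of the update.} Let the minimum-shared-norm SST update be $\delta^{\mathrm{SST}}=(\delta_{\mathrm{sh}}^\star,\delta_{\mathrm{sc}}^\star)$, i.e., a minimizer attaining $\rho_{\mathrm{SST}}^2$. A minimizer exists because the objective is a convex quadratic and the feasible set is a nonempty affine set; it is nonempty since $\delta_{\mathrm{sh}}=A^\dagger r_A$, $\delta_{\mathrm{sc}}=0$ is feasible. By definition, $\|\delta_{\mathrm{sh}}^\star\|_2=\rho_{\mathrm{SST}}$. The schema-specific part $\delta^\star_{\mathrm{sc}}$ can be arbitrary, with no bound on its size.

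\textbf{Step 2: Lipschitz bound at $\bar\theta$.} The diagnostic $G_{\bar\theta}(\cdot;\mathcal Q)$ is evaluated under $s_0$. By Assumption~\ref{ass:null_schema} and the locality hypothesis, it depends only on $\theta_{\mathrm{sh}}$. Hence
\[
G_{\bar\theta}(\bar\theta+\delta^{\mathrm{SST}};\mathcal Q)=G_{\bar\theta}\bigl((\bar\theta_{\mathrm{sh}}+\delta^\star_{\mathrm{sh}},\bar\theta_{\mathrm{sc}});\mathcal Q\bigr).
\]
This is the step that discards the possibly large $\delta^\star_{\mathrm{sc}}$. Since $G_{\bar\theta}(\bar\theta;\mathcal Q)=0$, nonnegativity and the $K_{\mathcal Q}$-Lipschitz property in $\theta_{\mathrm{sh}}$ give
\[
0\le G_{\bar\theta}(\bar\theta+\delta^{\mathrm{SST}};\mathcal Q)\le K_{\mathcal Q}\|\delta^\star_{\mathrm{sh}}\|_2=K_{\mathcal Q}\rho_{\mathrm{SST}}.
\]

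\textbf{Step 3: comparison with SFT.} Theorem~\ref{thm:sst_shared_norm} gives $\rho_{\mathrm{SST}}^2\le\rho_{\mathrm{SFT}}^2$. Taking nonnegative square roots and multiplying by $K_{\mathcal Q}\ge0$ yields the second inequality.

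\textbf{Main obstacle: locality.} The Lipschitz bound only holds "near $\bar\theta$," so it requires $\bar\theta_{\mathrm{sh}}+\delta^\star_{\mathrm{sh}}$ to lie in that neighborhood. I would state this explicitly as part of the local, linearized regime, for example by assuming the step (or $r$) is small enough that $\rho_{\mathrm{SST}}$ is below the neighborhood radius. Since $\rho_{\mathrm{SST}}\le\rho_{\mathrm{SFT}}$, any smallness condition imposed for SFT also covers SST.

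\textbf{Raw-reference remark.} For a diagnostic $G^0$ satisfying the same dependence and Lipschitz conditions, the reference value at $\bar\theta$ need not be zero. The triangle-type estimate therefore gives only
\[
|G^0(\theta)-G^0(\bar\theta)|\le K_{\mathcal Q}\rho_{\mathrm{SST}},
\]
which explains the final caveat in the corollary.
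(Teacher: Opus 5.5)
Your proposal is correct and follows the paper's proof. The paper uses locality under $s_0$ to reduce to the shared displacement, combines $G_{\bar\theta}(\bar\theta;\mathcal{Q})=0$ with nonnegativity and the Lipschitz bound to obtain $K_{\mathcal{Q}}\|\delta_{\mathrm{sh}}^{\mathrm{SST}}\|_2=K_{\mathcal{Q}}\rho_{\mathrm{SST}}$, and finishes with Theorem~\ref{thm:sst_shared_norm}; your remarks on existence of the minimizer, on keeping $\rho_{\mathrm{SST}}$ within the Lipschitz neighborhood, and on the increment bound for $G^0$ only make explicit what the paper leaves implicit.
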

\begin{proof}
See Appendix~\ref{app:proof_corollary_drift}.
\end{proof}

\begin{corollary}[Retention-loss-increase bound]
\label{cor:forgetting_gap_bound}
Assume the retention loss $\mathcal{L}_{\mathcal{Q}}$ evaluated under $s_0$
depends locally only on $\theta_{\mathrm{sh}}$ and is $K_{\mathcal{Q}}$-Lipschitz
in $\theta_{\mathrm{sh}}$ near $\bar\theta$. Define
\begin{equation}
    \Delta\mathcal{L}_{\mathcal{Q}}(\delta)
    =
    \mathcal{L}_{\mathcal{Q}}(\bar\theta+\delta)
    -
    \mathcal{L}_{\mathcal{Q}}(\bar\theta).
\end{equation}
Then, for any minimum-shared-norm SST update,
\begin{equation}
    \left[\Delta\mathcal{L}_{\mathcal{Q}}(\delta^{\mathrm{SST}})\right]_+
    \leq K_{\mathcal{Q}}\rho_{\mathrm{SST}}
    \leq K_{\mathcal{Q}}\rho_{\mathrm{SFT}}.
\end{equation}
\end{corollary}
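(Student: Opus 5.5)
The statement to prove is Corollary~\ref{cor:forgetting_gap_bound}, the retention-loss-increase bound. The plan is to reduce it to Theorem~\ref{thm:sst_shared_norm} through the Lipschitz hypothesis, mirroring Corollary~\ref{cor:incremental_drift_bound} but working with a signed loss difference rather than a nonnegative diagnostic.

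\emph{Step 1: isolate the shared part of the update.} Write the minimum-shared-norm SST update as $\delta^{\mathrm{SST}}=(\delta_{\mathrm{sh}}^\star,\delta_{\mathrm{sc}}^\star)$, where $\delta_{\mathrm{sh}}^\star$ attains the minimum defining $\rho_{\mathrm{SST}}^2$. Then $\|\delta_{\mathrm{sh}}^\star\|_2=\rho_{\mathrm{SST}}$. Under Assumption~\ref{ass:null_schema} and the stated locality hypothesis, $\mathcal{L}_{\mathcal{Q}}$ evaluated under $s_0$ depends only on $\theta_{\mathrm{sh}}$. Hence
\[
\mathcal{L}_{\mathcal{Q}}(\bar\theta+\delta^{\mathrm{SST}})=\mathcal{L}_{\mathcal{Q}}\bigl(\bar\theta_{\mathrm{sh}}+\delta_{\mathrm{sh}}^\star,\bar\theta_{\mathrm{sc}}\bigr),
\]
so the schema-specific component $\delta_{\mathrm{sc}}^\star$ drops out of $\Delta\mathcal{L}_{\mathcal{Q}}$.

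\emph{Step 2: apply the Lipschitz bound.} Using $K_{\mathcal{Q}}$-Lipschitzness in $\theta_{\mathrm{sh}}$ near $\bar\theta$, we get
\[
\Delta\mathcal{L}_{\mathcal{Q}}(\delta^{\mathrm{SST}})\le\bigl|\Delta\mathcal{L}_{\mathcal{Q}}(\delta^{\mathrm{SST}})\bigr|\le K_{\mathcal{Q}}\|\delta_{\mathrm{sh}}^\star\|_2=K_{\mathcal{Q}}\rho_{\mathrm{SST}}.
\]
Taking positive parts preserves this, since $[a]_+\le|a|$ and the right-hand side is nonnegative. This gives the first inequality.

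\emph{Step 3: compare with SFT.} The second inequality, $K_{\mathcal{Q}}\rho_{\mathrm{SST}}\le K_{\mathcal{Q}}\rho_{\mathrm{SFT}}$, follows from $\rho_{\mathrm{SST}}^2\le\rho_{\mathrm{SFT}}^2$ in Theorem~\ref{thm:sst_shared_norm}, together with monotonicity of the square root and $K_{\mathcal{Q}}\ge0$.

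\emph{Main obstacle.} The only delicate point is the locality requirement: the perturbation $\bar\theta+\delta^{\mathrm{SST}}$ must lie in the neighborhood where the Lipschitz and shared-only dependence hypotheses hold. I would make this explicit in one of two ways. Either assume $\rho_{\mathrm{SST}}$ is small enough, which is the linearized regime the whole appendix works in. Or rescale: note that $\rho_{\mathrm{SST}}\le\rho_{\mathrm{SFT}}$, so whenever the SFT step stays in the neighborhood, the shared part of the SST step does too.

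A second minor issue is existence of the minimizer $\delta_{\mathrm{sh}}^\star$. The constraint set is a nonempty affine set, since $r_A\in\operatorname{col}(A)$ makes it feasible with $\delta_{\mathrm{sc}}=0$. The squared norm is coercive on it, so a minimizer exists, and the proof of Theorem~\ref{thm:sst_shared_norm} constructs it explicitly.
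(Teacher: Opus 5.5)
Your proposal is correct and follows the paper's proof: null-schema inactivity removes $\delta_{\mathrm{sc}}$, and the $K_{\mathcal{Q}}$-Lipschitz bound with $[a]_+\le|a|$ and $\|\delta_{\mathrm{sh}}^{\mathrm{SST}}\|_2=\rho_{\mathrm{SST}}$ gives the first inequality, while Theorem~\ref{thm:sst_shared_norm} gives the second. Your remarks on locality and existence go beyond what the paper checks, but one small correction is needed: $\|\delta_{\mathrm{sh}}\|_2^2$ is not coercive on the joint feasible set, since it is constant along $\ker\widetilde{S}$ directions in $\delta_{\mathrm{sc}}$, so existence should rest on the explicit $M$-projection construction you already cite rather than on coercivity.
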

\begin{proof}
See Appendix~\ref{app:proof_corollary_forgetting}.
\end{proof}

\subsection{Proof of Corollary~\ref{cor:incremental_drift_bound}}
\label{app:proof_corollary_drift}

By Assumption~\ref{ass:null_schema} and the local schema-separable
decomposition, $G_{\bar\theta}(\cdot\,;\mathcal{Q})$ depends locally only
on $\delta_{\mathrm{sh}}$. Since $G_{\bar\theta}(\bar\theta;\mathcal{Q})=0$,
nonnegativity and Lipschitz continuity give
\begin{equation}
    G_{\bar\theta}(\bar\theta+\delta;\mathcal{Q})
    =
    \left|
    G_{\bar\theta}(\bar\theta+\delta;\mathcal{Q})
    -
    G_{\bar\theta}(\bar\theta;\mathcal{Q})
    \right|
    \leq
    K_{\mathcal{Q}}\|\delta_{\mathrm{sh}}\|_2.
\end{equation}
For the minimum-shared-norm SST update,
$\|\delta_{\mathrm{sh}}^{\mathrm{SST}}\|_2=\rho_{\mathrm{SST}}$.
Theorem~\ref{thm:sst_shared_norm} gives $\rho_{\mathrm{SST}}\leq\rho_{\mathrm{SFT}}$.

\subsection{Proof of Corollary~\ref{cor:forgetting_gap_bound}}
\label{app:proof_corollary_forgetting}

By Assumption~\ref{ass:null_schema}, $\mathcal{L}_{\mathcal{Q}}$ evaluated
under $s_0$ depends locally only on $\theta_{\mathrm{sh}}$ and is
$K_{\mathcal{Q}}$-Lipschitz. Therefore
\begin{equation}
    \left|
    \mathcal{L}_{\mathcal{Q}}(\bar\theta+\delta)
    -
    \mathcal{L}_{\mathcal{Q}}(\bar\theta)
    \right|
    \leq
    K_{\mathcal{Q}}\|\delta_{\mathrm{sh}}\|_2,
\end{equation}
which gives
\begin{equation}
    \left[\Delta\mathcal{L}_{\mathcal{Q}}(\delta)\right]_+
    \leq
    K_{\mathcal{Q}}\|\delta_{\mathrm{sh}}\|_2.
\end{equation}
For the minimum-shared-norm SST update,
$\|\delta_{\mathrm{sh}}^{\mathrm{SST}}\|_2=\rho_{\mathrm{SST}}$.
Theorem~\ref{thm:sst_shared_norm} gives $\rho_{\mathrm{SST}}\leq\rho_{\mathrm{SFT}}$.

\subsection{Exact interpolation versus projected fitable component}
\label{app:exact_vs_projected}

The main theorem compares the shared-parameter displacement needed for the component $r_A=P_A r$ that is fitable by the shared Jacobian. This is the appropriate comparison when the linearized system is interpreted in least-squares or projected-logit form. If instead one requires exact interpolation of the full residual $r$, the feasible sets should be written explicitly.

For exact SFT interpolation,
\begin{equation}
    \rho_{\mathrm{SFT,exact}}^2
    =
    \min_{\delta_{\mathrm{sh}}}
    \left\{
        \|\delta_{\mathrm{sh}}\|_2^2:
        A\delta_{\mathrm{sh}}=r
    \right\}
\end{equation}
is finite if and only if $r\in\operatorname{col}(A)$. For exact SST interpolation,
\begin{equation}
    \rho_{\mathrm{SST,exact}}^2
    =
    \min_{\delta_{\mathrm{sh}},\delta_{\mathrm{sc}}}
    \left\{
        \|\delta_{\mathrm{sh}}\|_2^2:
        A\delta_{\mathrm{sh}}+S\delta_{\mathrm{sc}}=r
    \right\}
\end{equation}
is finite if and only if $r\in\operatorname{col}(A)+\operatorname{col}(S)$. Whenever this exact SST problem is feasible, it can be written as
\begin{equation}
    \rho_{\mathrm{SST,exact}}^2
    =
    \min_{s\in\operatorname{col}(S):\; r-s\in\operatorname{col}(A)}
    \|r-s\|_M^2.
\end{equation}
In the special case where $r\in\operatorname{col}(A)$, exact SFT is feasible and the exact SST expression reduces to
\begin{equation}
    \rho_{\mathrm{SST,exact}}^2
    =
    \min_{s\in\operatorname{col}(S)\cap\operatorname{col}(A)}
    \|r-s\|_M^2
    \leq
    \|r\|_M^2
    =
    \rho_{\mathrm{SFT,exact}}^2.
\end{equation}
Thus exact interpolation also cannot increase the minimum required shared-parameter displacement. The projected theorem in the main text is slightly more permissive because it studies the shared-fitable component $r_A$ and the projected schema directions $P_A S$; the exact full-residual theorem instead uses the intersection $\operatorname{col}(S)\cap\operatorname{col}(A)$ when $r\in\operatorname{col}(A)$.

\subsection{Counterexample to the ordinary Euclidean-projection claim}
\label{app:counterexample_euclidean_projection}

The following example shows why the ordinary Euclidean projection $P_S$ onto $\operatorname{col}(S)$ is not sufficient for the theorem.

Let
\begin{equation}
    M=
    \begin{pmatrix}
    1 & -1\\
    -1 & 2
    \end{pmatrix},
    \qquad
    M^{-1}=
    \begin{pmatrix}
    2 & 1\\
    1 & 1
    \end{pmatrix},
\end{equation}
and choose a full-row-rank matrix $A$ such that $AA^\top=M^{-1}$, for example
\begin{equation}
    A=
    \begin{pmatrix}
    \sqrt{2} & 0\\
    1/\sqrt{2} & 1/\sqrt{2}
    \end{pmatrix}.
\end{equation}
Let
\begin{equation}
    S=
    \begin{pmatrix}
    1\\
    0
    \end{pmatrix},
    \qquad
    r=
    \begin{pmatrix}
    1\\
    1
    \end{pmatrix}.
\end{equation}
The ordinary Euclidean projection of $r$ onto $\operatorname{col}(S)$ is
\begin{equation}
    P_Sr=
    \begin{pmatrix}
    1\\
    0
    \end{pmatrix}
    \neq0.
\end{equation}
However,
\begin{equation}
    S^\top Mr
    =
    \begin{pmatrix}1&0\end{pmatrix}
    \begin{pmatrix}
    1 & -1\\
    -1 & 2
    \end{pmatrix}
    \begin{pmatrix}
    1\\
    1
    \end{pmatrix}
    =0.
\end{equation}
Therefore $P_{\operatorname{col}(S)}^M r=0$ and there is no strict improvement:
\begin{equation}
    \rho_{\mathrm{SFT}}^2
    =
    r^\top Mr
    =1,
    \qquad
    \rho_{\mathrm{SST}}^2
    =
    \min_{a\in\mathbb{R}}
    \begin{pmatrix}1-a&1\end{pmatrix}
    M
    \begin{pmatrix}1-a\\1\end{pmatrix}
    =
    \min_{a\in\mathbb{R}}(1+a^2)
    =1.
\end{equation}
Moreover, fitting the ordinary Euclidean residual $(I-P_S)r=(0,1)^\top$ through $A$ would require squared shared norm
\begin{equation}
    (0,1)M(0,1)^\top=2,
\end{equation}
which is larger than the SFT cost $1$. Hence the statement
\begin{equation}
    \rho_{\mathrm{SST}}^2
    =
    \min_{\delta_{\mathrm{sh}}}
    \left\{
        \|\delta_{\mathrm{sh}}\|_2^2:
        A\delta_{\mathrm{sh}}=(I-P_S)r
    \right\}
\end{equation}
is false in general. It becomes valid if $AA^\top$ is proportional to the identity, or if $P_S$ is replaced by the $M$-orthogonal projection.

\subsection{What the theorem does and does not prove}
\label{app:scope_of_theorem}

The theorem proves a statement about a local first-order interpolation problem: after linearizing the logits around the pre-update parameter $\bar\theta$, adding schema-specific tangent directions cannot increase the minimum amount of movement required in shared parameters to fit the same shared-fitable residual component. The theorem does not, by itself, prove that arbitrary nonconvex SGD will find the minimum-shared-norm solution, nor that every empirical forgetting metric, including accuracy max-drop, must decrease without the stated null-schema and Lipschitz assumptions. Theory-related claims should therefore be phrased as local upper bounds under explicit assumptions, while the actual retention behavior should be validated empirically.

%% file: appen/extrarq1.tex
\section{Additional Details for RQ1}
\label{app:rq1}

\paragraph{Forgetting metric.}
For each streamed domain $d$, let $t_d$ be the update step at which $d$ is introduced and let $T$ be the final update step.
We compute max-drop forgetting as
\begin{equation}
    F(d) = \max_{t_d \leq t \leq T} A_t(d) - A_T(d),
\end{equation}
where $A_t(d)$ is the evaluation accuracy on domain $d$ after update step $t$.
We average $F(d)$ over prior streamed domains only.
The final domain is excluded because it is not followed by a later update and therefore cannot be overwritten within the observed stream.

%% file: appen/implement.tex
\begin{table}[t]
\centering
\small
\caption{
\textbf{Default training and evaluation settings.}
}
\label{tab:hparams}
\begin{tabular}{llc}
\toprule
Category & Setting & Value \\
\midrule
\multicolumn{3}{l}{\textit{SFT hyperparameters}} \\
\midrule
SFT & Training framework & LlamaFactory \\
SFT & Maximum sequence length & 4096 \\
SFT & Precision & bf16 \\
SFT & Scheduler & cosine \\
SFT & Warmup ratio & 0.03 \\
SFT & Per-device train batch size & 2 \\
SFT & Gradient accumulation steps & 8 \\
SFT & Effective batch size & 16 \\
SFT & Training epochs & 5 \\
SFT & Full fine-tuning learning rate & $1\times 10^{-5}$ \\
SFT & Retrieved skills $k$ & 2 \\
\midrule
\multicolumn{3}{l}{\textit{Evaluation settings}} \\
\midrule
Eval & Decoding & deterministic \\
Eval & Temperature & 0.0 \\
Eval & Top-$p$ & 1.0 \\
Eval & Maximum new tokens & 512 \\
Eval & Inference engine & vLLM \\
Eval & Tensor parallel size & 1 \\
Eval & GPU memory utilization & 0.8 \\
Eval & Maximum number of sequences & 64 \\
\bottomrule
\end{tabular}
\vspace{-4mm}
\end{table}

\section{Implementation Details}
\label{app:implementation_details}

\paragraph{Sequential MATH adaptation.}
For the sequential skill-conditioned SFT setting, we train on four MATH domains in order: Algebra, Counting \& Probability, Geometry, and Intermediate Algebra. After each stage, we evaluate the resulting checkpoint on all seven Hendrycks MATH categories: Algebra, Counting \& Probability, Geometry, Intermediate Algebra, Number Theory, Prealgebra, and Precalculus. For SST, we construct skill-augmented training examples by retrieving the top-$k$ skills from the schema memory, with $k=2$ by default, and use the full skill prompt format described in Appendix~\ref{app:prompts}. Non-SST baselines use the same training examples and hyperparameters, but without the retrieved skill context.

\paragraph{SFT training configuration.}
We implement SFT using LlamaFactory~\citep{zheng2024llamafactory} and run evaluation with vLLM~\citep{kwon2023efficient}. Unless otherwise specified, we use bf16 training, cosine learning-rate scheduling, and a warmup ratio of $0.03$. The default per-device batch size is $2$ with gradient accumulation over $8$ steps, giving an effective batch size of $16$ on a single GPU. We train for $5$ epochs by default. For full fine-tuning, we use a learning rate of $1\times 10^{-5}$;

\paragraph{RL implementation details.}
For the agent-stream experiments, we instantiate SST-RL as a schema-conditioned offline reinforcement-learning setup.
In each agent domain, SST-RL augments the state prompt with the top retrieved skills from the schema memory and then optimizes the policy with a GRPO-style objective.
Seq-RL and SST-RL use the same RL objective, reward functions, decoding settings, and training budget; the only difference is that SST-RL conditions the training prompt on retrieved skill schemas.
In our implementation, the agent skill prompt uses top-$k$ retrieval with $k=4$.

Rewards are rule-based and combine action correctness with output-format compliance.
The main reward is an exact-action reward: it assigns $1.0$ when the generated action exactly matches the reference action and $0.0$ otherwise.
An auxiliary format reward adds $0.2$ when the response follows the required tagged protocol, including the expected \texttt{<think>} and \texttt{<action>} fields.
Unless otherwise specified, GRPO training uses learning rate $10^{-6}$, one training epoch, per-device batch size $1$, gradient accumulation $4$, four sampled completions per prompt, maximum prompt length $4096$, maximum completion length $512$, temperature $0.9$, $\beta=0.0$, and a DAPO-style loss.

\paragraph{Agent evaluation.}
Agent evaluation is action-level rather than full episodic environment evaluation.
The main metric reported for AlfWorld, WebShop, and Search is exact action-match accuracy under deterministic decoding.
For Search, we additionally compute tool-selection accuracy, average query-token F1, and a soft search accuracy that requires both the correct tool and query-token F1 of at least $0.5$.
All agent results in Table~\ref{tab:main_streaming} use retrieval-free inference unless explicitly stated otherwise.
Thus, the agent-stream experiment tests whether schema-conditioned training improves sequential robustness in agent decision-making, rather than introducing a new RL objective or relying on test-time skill retrieval.

\paragraph{Evaluation configuration.}
We use deterministic decoding for all evaluations, with temperature $0.0$ and top-$p=1.0$, and generate up to $512$ new tokens per example. For vLLM inference, we use tensor parallel size $1$. Typical inference settings use \texttt{gpu\_memory\_utilization=0.8} and \texttt{max\_num\_seqs=64}. All main results reported in Table~\ref{tab:main_streaming} use retrieval-free inference: the final fine-tuned model is evaluated with the raw solver prompt and without prepended skill blocks. Retrieval-enabled inference is used only as an optional diagnostic ablation, and any such results are reported separately when included.

\paragraph{Compute.}
All main SFT jobs are run on a single-GPU node, typically using $1\times$A100 GPU, $8$ CPU cores, and $256$GB host memory, with a $72$-hour wall-clock limit for training jobs. Base-model evaluation jobs use the same single-GPU setup with a shorter time budget. In our local setup, the full Hendrycks MATH test set contains $5{,}000$ examples across seven subject categories.

\paragraph{Replay baseline.}
Our replay baseline uses fixed-count rehearsal rather than a dynamic reservoir buffer.
At stream step $t$, the training set contains all current-domain examples together with a uniformly sampled subset from each previous domain.
Unless otherwise specified, we sample 500 examples per previous domain using a fixed random seed and sample without replacement after shuffling.
Replay examples are concatenated with current-domain examples without additional weighting.
The replay ratio is therefore step-dependent:
\[
\mathrm{replay\_ratio}_t = \frac{(t-1)\times 500}{|\mathcal D_t|}.
\]

\paragraph{Regularization baseline.}
The regularization baseline is an L2-SP-style parameter anchoring baseline.
It loads the original base model as an anchor and adds
\[
\lambda \sum_{\theta_i \in \Theta_{\mathrm{train}}} \|\theta_i-\theta_{0,i}\|_2^2
\]
to the sequential SFT objective over trainable parameters.
Unless otherwise specified, we use $\lambda=10^{-4}$ and the same optimizer, sequence length, batch size, learning-rate schedule, and stream order as the sequential SFT baseline.

\paragraph{Error-clustering diagnostic.}
We construct Figure~\ref{fig:error_clustering} by first evaluating the base model on MATH examples and retaining examples answered incorrectly under the same answer-normalization protocol used in Section~5.
Each failed example is represented by the \emph{problem text only} (i.e., excluding both the gold solution and the model-generated reasoning trace), embedded with the sentence embedding model \texttt{BAAI/bge-small-en-v1.5}, and projected to two dimensions with PCA using the top two principal components.
Procedural error labels are assigned by \emph{manual annotation} using the categories \textsc{algebraic manipulation}, \textsc{case omission / counting breakdown}, \textsc{geometric relation misuse}, \textsc{modular / divisibility reasoning}, \textsc{coordinate-modeling error}, \textsc{constraint / interval checking}, and \textsc{answer-format / finalization}.
Domain labels are the original MATH subject categories.
To avoid circularity, the procedural labels are used only for post-hoc visualization and are not used as supervision, retrieval targets, memory entries, or training signals for SST.
This diagnostic is intended only as motivating evidence that failure modes exhibit cross-domain procedural overlap, rather than as a quantitative claim about a complete latent error taxonomy.

%% file: appen/prompt.tex
\section{Prompting Details}
\label{app:prompts}

\paragraph{Overview.}
We use three prompt types in the SST pipeline: a raw solver prompt for base and main retrieval-free evaluation, a skill-conditioned solver prompt for SST training, and a memory-update prompt for maintaining the skill memory. Unless explicitly stated otherwise, all reported main results use the raw solver prompt at inference time without retrieved skill blocks. In all skill-conditioned SFT examples, retrieved skills are provided only as input context; the supervision target remains the gold solution.

\begin{tcolorbox}[fontupper=\small,title=\textbf{Raw solver prompt}]
{\ttfamily
Solve the following competition math problem. Show concise reasoning and final answer.\\
Type: \{problem\_type\}\\
Level: \{level\}\\[1mm]
Problem:\\
\{problem\}
}
\end{tcolorbox}

\paragraph{Skill-conditioned training prompt.}
For SST, we prepend the top-$k$ retrieved skill blocks to the problem input, with $k=2$ by default. The training target is the gold solution only; the model is not trained to reproduce the retrieved skills. If no skill is retrieved, the prompt includes a short fallback line indicating that the model should solve directly.

\begin{tcolorbox}[fontupper=\small,title=\textbf{Skill-conditioned solver prompt}]
{\ttfamily
Solve the following competition math problem. Use the retrieved skills when they fit the problem. Show concise reasoning and final answer.\\[1mm]
\{retrieved\_skill\_blocks\}\\[1mm]
Type: \{problem\_type\}\\
Level: \{level\}\\[1mm]
Problem:\\
\{problem\}
}
\end{tcolorbox}

\begin{tcolorbox}[fontupper=\small,title=\textbf{Fallback when no skill is retrieved}]
{\ttfamily
Relevant skills:\\
- None retrieved. Solve directly.
}
\end{tcolorbox}

\paragraph{Retrieved skill block format.}
Each retrieved skill is represented as a compact procedural description. The default prompt uses the full skill format below.

\begin{tcolorbox}[fontupper=\small,title=\textbf{Retrieved skill block}]
{\ttfamily
Relevant skill \{rank\}:\\
- Skill ID: \{skill\_id\}\\
- Family: \{skill\_family\}\\
- Goal: \{goal\}\\
- When to use: \{applicability\}\\
- Steps: \{canonical\_steps\}\\
- Common failure modes: \{failure\_modes\}
}
\end{tcolorbox}

\paragraph{Memory-update prompt.}
The memory-update prompt is used during bootstrapping and online memory maintenance. Given the current problem, the model output, and the retrieved skills, the model selects one memory operation from \textsc{reuse}, \textsc{revise}, \textsc{create}, \textsc{merge}, and \textsc{drop}. We require the output to be a single JSON object with no markdown or extra text. Reuse and drop target an existing skill; revise, create, and merge additionally provide a proposed skill entry.

\begin{tcolorbox}[fontupper=\small,title=\textbf{Memory-update prompt, condensed}]
{\ttfamily
You are maintaining a model-relative skill memory for competition math.\\
Given the current problem, the model's reasoning and final answer, and the retrieved skills, decide whether to reuse, revise, create, merge, or drop a skill.\\[1mm]
Rules:\\
- Prefer reuse over create.\\
- Prefer revise over create if the reasoning pattern is close.\\
- Create a new skill only when no existing skill covers the pattern.\\
- Use only the provided skill family IDs.\\
- Reuse, revise, merge, and drop must name exact existing skill IDs.\\
- Do not invent missing families, target IDs, or merge sources.\\
- Return exactly one JSON object with no markdown fences and no extra text.\\[1mm]
Return fields:\\
\{action, target\_skill\_id, proposed\_skill, reason, confidence\}\\[1mm]
Families:\\
\{families\_json\}\\[1mm]
Problem:\\
\{problem\}\\[1mm]
Model output:\\
\{prediction\}\\[1mm]
Retrieved skills:\\
\{retrieved\_skills\_json\}
}
\end{tcolorbox}

\paragraph{Chat template.}
Before vLLM inference, each text prompt is wrapped as a single user message and passed through the model tokenizer's chat template with generation enabled. This keeps the prompting format consistent across base-model evaluation, non-retrieval evaluation, and retrieval-conditioned evaluation.

%% file: appen/limitation.tex
\section{Limitations}
\label{app:limitations}

First, our empirical scope covers only two 3B backbones and two streaming settings; whether the same retention--transfer behavior holds for larger models, longer streams, multilingual or multimodal settings, or domains with weak prior procedural representation remains open.

Second, SST depends on schema extraction and retrieval quality. Incomplete, generic, or mismatched schemas provide weak or misleading conditioning, and inaccurate memory updates may fragment related skills or merge distinct ones. More systematic stress tests on noisy, ambiguous, or adversarial inputs are needed.

Third, the theoretical analysis is a local explanatory argument, not a global guarantee. The linearized result shows schema-conditioned directions need not increase shared-parameter displacement, but this does not guarantee nonconvex optimization finds such solutions in practice.

Fourth, we do not report full multi-seed statistics with confidence intervals for all comparisons. Future work should include repeated runs and larger-scale ablations over retrieval thresholds, memory-update rules, and prompt formats.

Fifth, SST introduces additional overhead for schema extraction, retrieval, and memory updates. Scaling behavior for very large streams or high-throughput settings remains unmeasured.

Finally, SST reduces but does not eliminate forgetting, and should be viewed as improving the retention--plasticity trade-off rather than solving continual learning.

%% file: appen/border_impact.tex
\section{Broader Impact}
\label{app:broader-impact}

SST may benefit applications requiring stable sequential model updates by reducing unnecessary overwriting of prior capabilities without retaining large amounts of old data. The explicit schema representation also offers a procedural interface for inspecting which reasoning patterns are reused or revised during training, which may assist in diagnosing failures and understanding how updates affect model behavior.

At the same time, more effective continual adaptation could amplify both beneficial and harmful uses, including harmful automation or unsafe agent behavior if deployed without safeguards. Learned schemas may also preserve and transfer biases or unsafe procedures encoded in training data, and should be evaluated not only for accuracy and forgetting but also for fairness, robustness, and safety in the intended deployment context.

Users should not over-interpret the explicit schema structure as a formal guarantee of model behavior: schemas are conditioning context for training, not certificates of internal reasoning. Any deployment should include independent evaluation, red-teaming, and monitoring for regressions. Future releases of code or schema memories should document data sources, licenses, intended use, and known limitations.